\documentclass[11pt,a4paper]{article}
\usepackage[T1]{fontenc}
\usepackage[utf8]{inputenc}
\usepackage{lmodern}
\usepackage[margin=25mm]{geometry}
\usepackage{amsmath,amssymb,amsthm}
\usepackage{graphicx}
\usepackage{authblk}
\usepackage{cite}
\usepackage[hyphens]{url}
\usepackage{tikz}
\usepackage[margin=0pt]{subfig}
\usetikzlibrary{3d, arrows.meta}
\usepackage{pgfplots}
\usepackage[hidelinks]{hyperref}
\pgfplotsset{compat=1.18}
\DeclareMathOperator{\proj}{proj}
\DeclareMathOperator{\dist}{dist}

\theoremstyle{plain}
\newtheorem{theorem}{Theorem}[section]
\newtheorem{lemma}[theorem]{Lemma}

\theoremstyle{definition}
\newtheorem{definition}[theorem]{Definition}
\newtheorem{example}[theorem]{Example}
\newtheorem{question}{Question}

\title{Relocation of compact sets in $\mathbb{R}^n$ by diffeomorphisms and linear separability of datasets in $\mathbb{R}^n$}
\author[1,2]{Xiao-Song Yang\thanks{Corresponding author. Email: \href{mailto:yangxs@hust.edu.cn}{\texttt{yangxs@hust.edu.cn}}.}}
\author[1]{Xuan Zhou\thanks{Email: \href{mailto:xuanzhou1037@hust.edu.cn}{\texttt{xuanzhou1037@hust.edu.cn}}.}}
\author[1]{Qi Zhou\thanks{Email: \href{mailto:qizhou1037@hust.edu.cn}{\texttt{qizhou1037@hust.edu.cn}}.}}
\affil[1]{School of Mathematics and Statistics, Huazhong University of Science and Technology, 1037 Luoyu Road, Wuhan 430074, P.R. China}
\affil[2]{Hubei Key Laboratory of Engineering Modeling and Scientific Computing, Huazhong University of Science and Technology, Wuhan 430074, P.R. China}

\date{}
\hypersetup{
  pdftitle={Relocation of compact sets in R\textasciicircum n by diffeomorphisms and linear separability of datasets in R\textasciicircum n},
  pdfauthor={Xiao-Song Yang, Xuan Zhou, Qi Zhou}
}

\begin{document}
\maketitle

	\begin{abstract}		
		Relocation of compact sets in an $n$-dimensional manifold by self-diffeomorphism is of its own interest as well as significant potential applications to data classification in data science. This paper presents a theory for relocating a finite number of compact sets in $\mathbb{R}^n$ to be relocated to arbitrary target domains in $\mathbb{R}^n$ by diffeomorphisms of $\mathbb{R}^n$. Furthermore, we prove that for any such collection, there exists a differentiable embedding into $\mathbb{R}^{n+1}$ such that their images become linearly separable.
		
		As applications of the established theory, we show that a finite number of compact datasets in $\mathbb{R}^n$ can be made linearly separable by width-$n$ deep neural networks (DNNs) with Leaky-ReLU, ELU, or SELU activation functions, under a mild condition. In addition, we show that any finite number of mutually disjoint compact datasets in $\mathbb{R}^n$ can be made linearly separable in $\mathbb{R}^{n+1}$ by a width-$(n+1)$ DNN.
	\end{abstract}
	
\noindent\textbf{Keywords:} Relocation of compact sets, diffeomorphisms, linear separability, approximation, deep neural networks
\par\medskip
\noindent\textbf{Mathematics Subject Classification 2020:} Primary 68T07; Secondary 57R50.
\par\medskip
	
	\section{Introduction}\label{Introduction}
	Differential topology is a fascinating branch of mathematics that has extensive applications in multiple other research fields. In this paper, we are interested in the following questions because of their potential application in deep learning theory:
	
	\begin{question} \label{q:general_relocation}
		Let $M$ be an $n$-dimensional smooth manifold. Let $D_1, \dots, D_k \subset M$ be mutually disjoint compact subsets, and $C_1, \dots, C_k \subset M$ be a collection of disjoint balls (or target domains). Under what conditions does there exist a diffeomorphism $F: M \to M$ such that $F(D_i) \subset C_i$ for all $i=1, \dots, k$?
	\end{question}
	
	\begin{question} \label{q:linear_separability}
		In the Euclidean case $M=\mathbb{R}^n$, does there exist a diffeomorphism $F: \mathbb{R}^n \to \mathbb{R}^n$ such that the images $F(D_1), \dots, F(D_k)$ are linearly separable (\textup{i.e.}, separable by $(n-1)$-dimensional disjoint hyperplanes)?
	\end{question}
	
	\begin{question} \label{q:embedding_width}
		More generally, if the sets $D_1$ and $D_2$ cannot be made linearly separable by any self-diffeomorphism of $\mathbb{R}^n$, what is the minimal dimension $m > n$ such that there exists an embedding $F: \mathbb{R}^n \to \mathbb{R}^m$ for which $F(D_1)$ and $F(D_2)$ become linearly separable in $\mathbb{R}^m$?
	\end{question}
	
	It is known in brain science that a collection of neural population called ``object manifold'' response to an object sensed by visual system~\cite{cohen2020separability, grootswagers2019untangling}, and different object manifold responses to different sensed object. It has also been hypothesized that the visual hierarchy untangles ``object manifolds'' into linearly separable ones in the setting of deep neural networks. This naturally motivates the questions asked in the beginning from perspective of diffeomorphism theory.
	
	In data science, the support vector machine \cite{braga2020fundamentals} is an important technique for data classification in the situation that the datasets are linearly separable. Nonetheless, datasets in general are not linearly separable in Euclidean space, thus it is natural to try to make them linearly separable by some transformations between Euclidean spaces without loss of the information of the original datasets. This poses the question of if it is possible to have a diffeomorphism that can transform the datasets to linear separable ones in the sense that datasets with different labels can be separated by hyperplanes in the Euclidean space.
	
	In addition, it is interesting in its own right to study how a self-diffeomorphism of a smooth manifold manipulates compact subsets of the manifold in the context of differential topology.
	
	The purpose of this paper is to answer the above questions by establishing a theory for the relocation of compact subsets within a smooth manifold via diffeomorphisms, and to apply this theory to problems of linear separability of geometric objects in deep learning. One of the main results in this paper is that for a finite number of mutually disjoint compact datasets in $\mathbb{R}^n$, there exists a width-$(n+1)$ deep neural network (DNN) $\Phi: \mathbb{R}^n \to \mathbb{R}^{n+1}$ with Leaky-ReLU, ELU, or SELU activation function, making these compact sets linearly separable in $\mathbb{R}^{n+1}$.
	
	The remainder of this paper is organized as follows. Section \ref{Preliminaries and Notation} introduces the preliminary concepts and notations of differential topology and deep neural network architectures. Section \ref{Relocation of Compact Sets via Diffeomorphisms} establishes the core theoretical framework for the relocation of compact sets, rigorously proving that disjoint compact subsets can be relocated into arbitrary target domains by ambient diffeomorphisms. Section \ref{Linear separability by DNNs} applies these topological results to deep learning, demonstrating that deep narrow networks with specific activation functions can untangle complex data structures to achieve linear classifiability, such as the Hopf Link and the Swiss Roll.
	
	\section{Preliminaries and Notation}\label{Preliminaries and Notation}
	In this section, we introduce notations and definitions used throughout this paper.
	
	\subsection{Basic concepts of Differential Topology}\label{Basic concepts of Differential Topology}
	\begin{definition}[Standard ball]\label{open ball}
		Let $B^n(\bar{x}, r) = \{ x \in \mathbb{R}^n \mid \lVert x - \bar{x}\rVert < r \}$ denote the \textbf{Standard ball} in $\mathbb{R}^n$ with center $\bar{x}$ and radius $r$.
	\end{definition}
	
	\begin{definition}[Smooth map on Compact Sets]\label{def:smooth_map}
		For a compact set $D \subset \mathbb{R}^n$, a map $f: D \to \mathbb{R}^n$ is said to be \textbf{smooth} if there exists an open neighborhood $U$ of $D$ and a smooth map $F: U \to \mathbb{R}^n$ such that $F|_D = f$.
	\end{definition}
	
	\begin{definition}[Differentiable $k$-ball]\label{def:k_ball}
		A subset $D \subset \mathbb{R}^n$ is called a \textbf{differentiable $k$-ball} or \textbf{$k$-ball} (where $k \le n$) if there exists an open neighborhood $U$ of the closed unit ball $\overline{B^k} \subset \mathbb{R}^k$ and a smooth embedding $\psi: U \to \mathbb{R}^n$ such that $D = \psi(\overline{B^k})$.
	\end{definition}
	
	\begin{definition}[Diffeomorphisms]\label{def:diff_spaces}
		Let $U \subset \mathbb{R}^d$ be an open subset, and let $r$ be a non-negative integer or infinity.
		\begin{itemize}
			\item $\mathcal{D}^r(U)$ denotes the set of $C^r$-diffeomorphisms from $U$ to its image in $\mathbb{R}^d$.
			\item $\text{Diff}_c^r(\mathbb{R}^d)$ denotes the set of compactly supported $C^r$-diffeomorphisms on $\mathbb{R}^d$. A diffeomorphism $f$ on $\mathbb{R}^d$ is \textbf{compactly supported} if there exists a compact subset $K\subset \mathbb{R}^d$ such that for any $x\notin K$, $f(x)=x$.
		\end{itemize}
	\end{definition}
	
	\subsection{Neural Network Architectures}\label{Neural Network Architectures}
	In this subsection, we review some of the fundamental definitions of DNNs.	
	\begin{definition}[Activation Functions]\label{Activation Functions}
		We consider the following activation functions mentioned in this paper:
		\begin{itemize}
			\item \textbf{ReLU} (Rectified Linear Unit): $\mathrm{ReLU}(x) = \max\{0, x\}$.
			
			\item \textbf{Leaky-ReLU}: For a parameter $0 < \alpha < 1$,
			\[
			\mathrm{Leaky\textup{-}ReLU}_{\alpha}(x) = \begin{cases} x & \textup{if } x \ge 0, \\ \alpha x & \textup{if } x < 0. \end{cases}
			\]
			
			\item \textbf{ELU} (Exponential Linear Unit): For a parameter $\alpha > 0$,
			\[
			\mathrm{ELU}_{\alpha}(x) = \begin{cases} x & \textup{if } x \ge 0, \\ \alpha(e^x - 1) & \textup{if } x < 0. \end{cases}
			\]
			
			\item \textbf{SELU} (Scaled Exponential Linear Unit): For parameters $\lambda > 0$ and $\alpha > 0$,
			\[
			\mathrm{SELU}_{\lambda, \alpha}(x) = \lambda \begin{cases} x & \textup{if } x \ge 0, \\ \alpha(e^x - 1) & \textup{if } x < 0. \end{cases}
			\]
		\end{itemize}
		
		\noindent Let $\sigma$ denote any of the aforementioned activation functions. Activation functions applied to vector function as componentwise operators. For $x = (x_1, \dots, x_d) \in \mathbb{R}^d$,
		\[
		\sigma(x) := (\sigma(x_1), \dots, \sigma(x_d)).
		\]
	\end{definition}
	
	\begin{definition}[Deep Neural Network and Width]\label{Deep Neural Network and Width}
		A Deep Neural Network (DNN) $\Phi: \mathbb{R}^{n_{\mathrm{in}}} \to \mathbb{R}^{n_{\mathrm{out}}}$ with $L$ hidden layers is a mapping constructed by the composition of affine transformations and non-linear activation functions, expressed as
		\[
		\Phi(x) = T_L \circ \sigma \circ T_{L-1} \circ \dots \circ \sigma \circ T_1(x),
		\]
		where
		\begin{itemize}
			\item $T_i: \mathbb{R}^{d_{i-1}} \to \mathbb{R}^{d_i}$ are affine maps defined by $T_i(y) = W_i y + b_i$, with weight matrices $W_i \in \mathbb{R}^{d_i \times d_{i-1}}$ and bias vectors $b_i \in \mathbb{R}^{d_i}$.
			\item $\sigma: \mathbb{R} \to \mathbb{R}$ is a non-linear activation function.
			\item The dimensions of the layers are given by the sequence $(d_0, d_1, \dots, d_L)$, where $d_0 = n_{\mathrm{in}}$ and $d_L = n_{\mathrm{out}}$.
		\end{itemize}
		
		The \textbf{width} of the DNN, denoted by $w(\Phi)$, is defined as the maximum dimension of its hidden layers,
		\[
		w(\Phi) = \max \{d_1, d_2, \dots, d_{L-1}\}.
		\]
		Throughout this paper, the term ``width-$n$ DNN'' refers to a network where the dimension of every hidden layer is at most $n$ (\textup{i.e.}, $w(\Phi) \le n$).
	\end{definition}
	
	\begin{definition}[Compact Approximation]\label{Compact Approximation}
		Let $\mathcal{F}$ and $\mathcal{G}$ be two sets of functions from $X$ to $Y$. We say that $\mathcal{G}$ \textbf{compactly approximates} $\mathcal{F}$ if for any $f \in \mathcal{F}$, any compact set $K \subset X$, and any $\epsilon > 0$, there exists a function $g \in \mathcal{G}$ such that
		\[
		\sup_{x \in K} \lVert f(x) - g(x)\rVert < \epsilon.
		\]
	\end{definition}
	
	\section{Relocation of Compact Sets by Diffeomorphisms}\label{Relocation of Compact Sets via Diffeomorphisms}
	In this section, we establish a theory for the relocation of compact subsets within a connected smooth manifold $M$. For convenience, we only consider the Euclidean space $\mathbb{R}^n$ in the subsequent analysis.
	
	The following lemma constructs a smooth isotopy centered at the origin of $\mathbb{R}^{n}$, capable of compressing a ball to an arbitrarily small radius while remaining the identity map outside a fixed neighborhood.
	\begin{lemma}\label{isotopy compression lemma}
		For any $0 < \delta < r$, there exists a smooth isotopy $F: [0, 1] \times \mathbb{R}^n \to \mathbb{R}^n$ such that
		\begin{enumerate}
			\item $F(s, \cdot): \mathbb{R}^n \to \mathbb{R}^n$ is a diffeomorphism for each $s$, and $F(0, \cdot) = \mathrm{id}$.
			\item $F(s, x) = x$ for all $x \in \mathbb{R}^n \setminus B^n(0, \theta)$, for some fixed $\theta > r$.
			\item $F(1, B^n(0, r)) \subset B^n(0, \delta)$.
		\end{enumerate}
	\end{lemma}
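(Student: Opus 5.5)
I would build the isotopy as the time-$s$ flow of a compactly supported, radially inward vector field, or more directly as a radial reparametrization. The radial route avoids solving any ODE. Fix $\theta > r$, say $\theta = r+1$. I look for a smooth family of functions $\rho_s : [0,\infty) \to [0,\infty)$ and set
\[
F(s,x) = \rho_s(\lVert x \rVert)\,\frac{x}{\lVert x\rVert}\quad (x\neq 0),\qquad F(s,0)=0.
\]
To get smoothness at the origin, I write this as $F(s,x) = \phi_s(\lVert x\rVert^2)\, x$, where $\phi_s:[0,\infty)\to(0,\infty)$ is smooth in $(s,t)$ jointly. Then $F$ is smooth on $[0,1]\times\mathbb{R}^n$, because $\lVert x\rVert^2$ is smooth. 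The radial profile is $\rho_s(t)=t\,\phi_s(t^2)$.

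The requirements on the profile are as follows. First, $\phi_0\equiv 1$, which gives $F(0,\cdot)=\mathrm{id}$. Second, $\phi_s(t)=1$ for $t\ge \theta^2$, which gives property 2. Third, $t\mapsto t\,\phi_s(t^2)$ must be a strictly increasing bijection of $[0,\infty)$ with positive derivative everywhere. Fourth, $r\,\phi_1(r^2) < \delta$, which gives property 3, since $B^n(0,r)$ is mapped into the ball of radius $\rho_1(r)$.

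To produce $\phi_s$, I would first choose a single smooth increasing diffeomorphism $g:[0,\infty)\to[0,\infty)$ with the following properties:
\begin{itemize}
\item $g(t)=ct$ near $0$, with $c=\delta/(2r)$;
\item $g(t)=t$ for $t\ge\theta$;
\item $g(r)<\delta$;
\item $g'>0$.
\end{itemize}
Such a $g$ exists by gluing with a bump function. Let $\beta$ be a smooth cutoff equal to $1$ on $[0,r]$ and $0$ on $[\theta,\infty)$. Integrating a positive density gives $g(t)=\int_0^t \big(c\,\beta(u)+h(u)(1-\beta(u))\big)\,du$, where $h>0$ is tuned so that $g(\theta)=\theta$ and $g'=1$ beyond $\theta$. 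This is possible because $c<1$, so $\int_0^\theta c\,\beta$ is less than $\theta$ and the remaining mass can be supplied positively.

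Next I interpolate with $\rho_s(t) = (1-s)t + s\,g(t)$. This is a convex combination of increasing functions with positive derivative, so it is again a diffeomorphism of $[0,\infty)$ fixing $[\theta,\infty)$ pointwise. Writing $g(t)=t\,\gamma(t^2)$ near $0$ with $\gamma\equiv c$, the corresponding $\phi_s(t^2)=(1-s)+s\,g(t)/t$ is smooth and positive; away from $0$ it is smooth because $t>0$.

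Bijectivity of $F(s,\cdot)$ follows because it preserves each ray and acts on it by the bijection $\rho_s$. Its inverse has the same form with $\rho_s^{-1}$, which is smooth by the inverse function theorem since $\rho_s'>0$. Near the origin the map is linear multiplication by $(1-s)+sc>0$, so both $F(s,\cdot)$ and its inverse are smooth there.

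The main obstacle is the smoothness of $F$ and of its inverse at the origin. The choice that $g$ be exactly linear near $0$ settles it, so the remaining care goes into constructing $g$ with positive derivative while meeting both boundary conditions. Alternatively, I could take $X(x)=-\beta(\lVert x\rVert)\,x$ and its flow at time $T=\log(2r/\delta)$, rescaled to $s\in[0,1]$. This flow contracts $B^n(0,r)$ by $e^{-T}$, since $\beta=1$ there and balls centered at the origin are invariant. It is the identity outside $B^n(0,\theta)$, and it is automatically a diffeomorphism by standard ODE theory for compactly supported fields.
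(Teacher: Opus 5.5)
Your proposal is correct, and the ``alternative'' in your last paragraph is essentially the paper's proof. The paper flows along $V(x)=-\eta(x)x$, with $\eta\equiv 1$ on $B^n(0,r)$ and support in $\overline{B^n(0,\theta)}$. It reads off $\phi(t,x)=e^{-t}x$ on $B^n(0,r)$ for $t\ge 0$ and sets $F(s,x)=\phi(s\bar t,x)$ with $\bar t>\ln(r/\delta)$. The paper's $\eta$ need not be radial, because the trajectory $e^{-t}x$ never leaves the region where $\eta\equiv 1$; your radial cutoff $\beta(\lVert x\rVert)$ just makes that invariance evident.

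Your main route is genuinely different: the explicit radial reparametrization $x\mapsto \rho_s(\lVert x\rVert)\,x/\lVert x\rVert$ with $\rho_s=(1-s)\,\mathrm{id}+s\,g$. It needs no ODE theory and gives a closed-form isotopy that is affine in $s$. The price is that you must check by hand what the flow supplies for free:
\begin{itemize}
\item smoothness at the origin, which you settle by making $g$ exactly linear on $[0,r]$;
\item smoothness of the inverse, which your formula with $\rho_s^{-1}$ handles;
\item existence of the profile $g$. Your mass count here is right: the deficit $\theta-c\int_0^\theta\beta$ is positive since $c<1$, and a positive smooth $h$ equal to $1$ on $[\theta,\infty)$ can supply any positive amount on $(r,\theta)$.
\end{itemize}

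The flow formulation is the one the paper reuses in Lemma \ref{Transitivity Lemma}, where the field is pushed forward through an embedding. Your explicit map would work equally well there by conjugating with that embedding, provided you take $\theta<1$ so the map is the identity near the boundary of the unit ball.
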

	
	\begin{proof}
		Fix a real number $\theta > r$. Let $\eta: \mathbb{R}^n \to [0, 1]$ be a smooth bump function such that $\eta(x) = 1$ for $x \in B^n(0, r)$ and $\eta(x) = 0$ for $x \in \mathbb{R}^n \setminus B^n(0, \theta)$. Construct a smooth vector field $V$ on $\mathbb{R}^n$ defined by
		\[
		V(x) = -\eta(x)x,\quad x \in \mathbb{R}^n.
		\]
		Let $\phi: \mathbb{R} \times \mathbb{R}^n \to \mathbb{R}^n$ be the global flow generated by the vector field $V$, satisfying
		\[
		\frac{\partial}{\partial t}\phi(t, x) = V(\phi(t, x)), \quad \phi(0, x) = x.
		\]
		Since $V$ is a smooth vector field with compact support contained in $\overline{B^n(0, \theta)}$, the flow $\phi$ is well-defined for all time $t \in \mathbb{R}$, and each map $\phi(t, \cdot)$ is a diffeomorphism of $\mathbb{R}^n$. 
		By the properties of the bump function $\eta$, the solution to the differential equation is given by
		\[
		\phi(t, x) = 
		\begin{cases} 
			e^{-t}x & \textup{if } x \in B^n(0, r), \\
			x & \textup{if } x \in \mathbb{R}^n \setminus B^n(0, \theta).
		\end{cases}
		\]
		It is clear that for $t$ large enough, specifically if we choose $\bar{t} > 0$ such that $e^{-\bar{t}}r < \delta$, that is, $\bar{t} > \ln(r/\delta)$, we have
		\[
		\phi(\bar{t}, B^n(0, r)) \subset B^n(0, \delta).
		\]
		Finally, we define the required isotopy $F: [0, 1] \times \mathbb{R}^n \to \mathbb{R}^n$ by setting $F(s, x) = \phi(s\bar{t}, x)$. It follows immediately from the standard properties of the flow and the definition of $\bar{t}$ that $F$ satisfies conditions (1)--(3).
	\end{proof}
	
	Having established the ability to compress a ball in Lemma \ref{isotopy compression lemma}, we now demonstrate that points can be transported within a local domain. The following lemma asserts that within a convex neighborhood, any two points can be connected by a diffeomorphism that acts strictly as a translation along the connecting segment, while remaining the identity outside the neighborhood.
	
	\begin{lemma}\label{Homogeneous lamma}
		Given two points $p, q \in B^n(\bar{x}, r) \subset \mathbb{R}^n$, there exists a diffeomorphism $H: \mathbb{R}^n \to \mathbb{R}^n$ such that $H(p) = q$ and $H(x) = x$ for all $x \in \mathbb{R}^n \setminus B^n(\bar{x}, r)$.
	\end{lemma}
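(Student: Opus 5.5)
We construct $H$ as the time-one map of a compactly supported vector field that is constant along the segment from $p$ to $q$, in the same spirit as the flow argument in Lemma \ref{isotopy compression lemma}. If $p=q$ we take $H=\mathrm{id}$, so assume $p\neq q$ and set $v=q-p$. Let $L=\{p+tv : t\in[0,1]\}$ be the closed segment joining $p$ and $q$. By convexity of the ball, $L\subset B^n(\bar{x},r)$. Since $L$ is compact and the ball is open, $\epsilon:=\dist(L,\mathbb{R}^n\setminus B^n(\bar{x},r))>0$.

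\textbf{The vector field.} Choose a smooth bump function $\eta:\mathbb{R}^n\to[0,1]$ with $\eta\equiv1$ on the $\epsilon/3$-neighborhood of $L$ and $\eta\equiv0$ outside the $2\epsilon/3$-neighborhood of $L$. Such a function exists by the standard construction: mollify the indicator function of the $\epsilon/2$-neighborhood. Define
\[
V(x)=\eta(x)\,v .
\]
Then $V$ is smooth, and its support is a compact subset of $B^n(\bar{x},r)$.

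\textbf{The flow.} Because $V$ has compact support, its flow $\phi$ is global, and each $\phi(t,\cdot)$ is a diffeomorphism of $\mathbb{R}^n$. This is the same fact used in the proof of Lemma \ref{isotopy compression lemma}. Outside the support of $\eta$ we have $V=0$, so $\phi(t,x)=x$ there. In particular $\phi(t,x)=x$ for all $x\in\mathbb{R}^n\setminus B^n(\bar{x},r)$.

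\textbf{Checking $H(p)=q$.} This is the only real step. Consider the candidate curve $\gamma(t)=p+tv$ for $t\in[0,1]$. It stays in $L$, where $\eta=1$, so $\gamma'(t)=v=V(\gamma(t))$. Hence $\gamma$ is an integral curve of $V$ starting at $p$. By uniqueness of solutions of ODEs, $\phi(t,p)=\gamma(t)$ for $t\in[0,1]$, and therefore $\phi(1,p)=q$.

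Setting $H=\phi(1,\cdot)$ gives a diffeomorphism of $\mathbb{R}^n$ with $H(p)=q$ that is the identity outside $B^n(\bar{x},r)$. Moreover, $H$ acts as translation by $v$ on $p$ and carries it along the connecting segment, as described before the lemma.
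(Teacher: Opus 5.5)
Your proof is correct and follows essentially the same route as the paper: $H$ is the time-one map of the compactly supported field $\eta\,(q-p)$ with $\eta\equiv 1$ along the segment from $p$ to $q$, and uniqueness of integral curves gives $H(p)=q$. The only difference is cosmetic. The paper uses a radial cutoff $\eta(\lVert x-\bar{x}\rVert^2)$ equal to $1$ on a concentric ball $B^n(\bar{x},\delta)$ with $\max\{\lVert p-\bar{x}\rVert,\lVert q-\bar{x}\rVert\}<\delta<r$, whereas you cut off around a neighbourhood of the segment; your explicit support control and appeal to uniqueness are slightly cleaner than the paper's integral-equation step.
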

	
	\begin{proof}
		Let $\rho = \max\{\lVert p-\bar{x}\rVert, \lVert q-\bar{x}\rVert\}$. Since $p, q \in B^n(\bar{x}, r)$, we have $\rho < r$. 
		Choose a constant $\delta$ such that $\rho < \delta < r$.
		
		Let $\eta: [0, \infty) \to [0, 1]$ be a smooth bump function such that $\eta(s) = 1$ for $s \in [0, \delta^2]$. Now, we construct a smooth vector field $V$ on $\mathbb{R}^n$ defined by
		\[
		V(x) = \eta(\lVert x-\bar{x}\rVert^2)(q-p), \quad x \in \mathbb{R}^n.
		\]
		This vector field is compactly supported within the open ball $B^n(\bar{x}, r)$. Let $\phi: \mathbb{R} \times \mathbb{R}^n \to \mathbb{R}^n$ be the global flow generated by $V$, which is the solution to the Cauchy problem
		\[
		\frac{\partial}{\partial t}\phi(t, x) = V(\phi(t, x)), \quad \phi(0, x) = x.
		\]
		The solution satisfies the equivalent integral equation
		\[
		\phi(t, x) = x + \int_0^t \eta(\lVert\phi(\tau, x)-\bar{x}\rVert^2)(q - p) \, d\tau.
		\]
		We define the diffeomorphism $H: \mathbb{R}^n \to \mathbb{R}^n$ by $H(x) = \phi(1, x)$. For any $x \in \mathbb{R}^n \setminus B^n(\bar{x}, r)$, $V(x) = 0$, we have $H(x) = x$. 
		
		Now, consider the line segment $L$ connecting $p$ and $q$. Since $p, q \in B^n(\bar{x}, \delta)$ and the ball is convex, the entire segment $L$ lies within $B^n(\bar{x}, \delta)$, where $\eta(\lVert z-\bar{x}\rVert^2) \equiv 1$. Consequently, the integral equation simplifies to $\phi(t, p) = p + t(q-p)$ for $t \in [0, 1]$, which gives $H(p) = q$.		
	\end{proof}
	
	By means of the vector field constructed in Lemma \ref{isotopy compression lemma}, we can define a corresponding vector field on the image of an open ball by a smooth embedding. This allows us to compress an embedded open ball, located arbitrarily in the space, into a small neighborhood of its center.
	
	\begin{lemma}\label{Transitivity Lemma}
		Let $g: B^n(0, 1) \to \mathbb{R}^n$ be an embedding, and let $D = g(B^n(0, 1))$. Let $p = g(0)$.
		For any ball $B^n(p, \delta) \subset D$ and $r \in (0, 1)$, there exists a diffeomorphism $H: \mathbb{R}^n \to \mathbb{R}^n$ satisfying
		\begin{enumerate}
			\item $H(g(B^n(0, r))) \subset B^n(p, \delta)$,
			\item $H(x) = x$ for all $x \in \mathbb{R}^n \setminus D$.
		\end{enumerate}
	\end{lemma}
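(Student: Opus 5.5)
The plan is to pull the problem back to the unit ball through $g$, run the compression flow of Lemma \ref{isotopy compression lemma} there, and push the resulting vector field forward to $D$. Two sets matter. Since $g$ is continuous with $g(0)=p$, there is $\varepsilon\in(0,r)$ with $g(B^n(0,\varepsilon))\subset B^n(p,\delta)$. We also fix $\theta$ with $r<\theta<1$.

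On $B^n(0,1)$ we take the vector field $W(y)=-\eta(y)\,y$ from the proof of Lemma \ref{isotopy compression lemma}. Here the bump $\eta$ equals $1$ on $\overline{B^n(0,r)}$ and vanishes outside $\overline{B^n(0,\theta')}$ for some $\theta'\in(r,\theta)$, so $W$ has compact support inside $\overline{B^n(0,\theta')}\subset B^n(0,1)$. Its flow $\phi$ satisfies $\phi(t,y)=e^{-t}y$ for $y\in B^n(0,r)$. To see this, note that $\lVert\phi(t,y)\rVert$ is nonincreasing (because $\frac{d}{dt}\lVert\phi\rVert^2=-2\eta\lVert\phi\rVert^2\le 0$), so the trajectory stays in $B^n(0,r)$ where $\eta\equiv1$. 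Choose $\bar t>\ln(r/\varepsilon)$; then $\phi(\bar t,B^n(0,r))\subset B^n(0,\varepsilon)$.

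Next we push forward. Since $g$ is an embedding of an open set into $\mathbb{R}^n$ and the dimensions agree, invariance of domain together with the smooth inverse function theorem shows that $D$ is open and $g:B^n(0,1)\to D$ is a diffeomorphism. Define
\[
V(x)=\begin{cases} Dg\bigl(g^{-1}(x)\bigr)\,W\bigl(g^{-1}(x)\bigr), & x\in D,\\ 0, & x\notin D.\end{cases}
\]
The support of $V$ is $g(\operatorname{supp}W)$. This is compact because it is the continuous image of a compact set, so it is closed in $\mathbb{R}^n$ and contained in $D$. Since $V$ is smooth on the open set $D$ and vanishes on the open set $\mathbb{R}^n\setminus g(\operatorname{supp}W)$, it is smooth on all of $\mathbb{R}^n$ with compact support. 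Its flow $\Psi$ is therefore global, and each $\Psi(t,\cdot)$ is a diffeomorphism equal to the identity outside $g(\operatorname{supp}W)$, in particular on $\mathbb{R}^n\setminus D$. By naturality of flows under the diffeomorphism $g$, we have $\Psi(t,g(y))=g(\phi(t,y))$ for $y\in B^n(0,1)$.

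Set $H=\Psi(\bar t,\cdot)$. Then
\[
H\bigl(g(B^n(0,r))\bigr)=g\bigl(\phi(\bar t,B^n(0,r))\bigr)\subset g\bigl(B^n(0,\varepsilon)\bigr)\subset B^n(p,\delta),
\]
which gives (1), while (2) was shown above. The main obstacle is the global smoothness of the pushed-forward field. It is handled by keeping $\operatorname{supp}W$ compact inside the open unit ball, so that its image is a compact subset of the open set $D$ and the zero extension is smooth. The naturality identity also relies on $g^{-1}$ being smooth, which is exactly where the equal-dimension embedding hypothesis is used.
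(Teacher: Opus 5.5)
Your proposal is correct and follows the paper's proof. You pull back to $B^n(0,1)$, take the compression field $-\eta(y)\,y$ supported compactly inside the unit ball, push it forward by $g$ and extend by zero, then use the time-$\bar t$ map of its flow via $\Psi(t,g(y))=g(\phi(t,y))$. Beyond the paper, you make explicit several points it only asserts: the choice of $\varepsilon$ with $g(B^n(0,\varepsilon))\subset B^n(p,\delta)$ that fixes $\bar t>\ln(r/\varepsilon)$, the monotonicity of $\lVert\phi(t,y)\rVert$ giving $\phi(t,y)=e^{-t}y$, and the openness of $D$ together with the smoothness of the zero extension.
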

	
	\begin{proof}
		In view of Lemma \ref{isotopy compression lemma}, we first choose a constant $r_0$ such that $r < r_0 < 1$. Let $\eta: \mathbb{R}^n \to [0, 1]$ be a smooth cutoff function such that $\eta \equiv 1$ on $\overline{B^n(0, r)}$ and $\operatorname{supp}(\eta) \subset B^n(0, r_0)$. We define a smooth vector field $V$ on the domain $\mathbb{R}^n$ by
		\[
		V(x) = -\eta(x)x, \quad x \in \mathbb{R}^n.
		\]
		
		Now, we define the vector field $V_g$ on $\mathbb{R}^n$ as follows:
		\[
		V_g(y) = 
		\begin{cases} 
			Dg(g^{-1}(y)) \cdot V(g^{-1}(y)) & \textup{if } y \in D, \\
			0 & \textup{if } y \in \mathbb{R}^n \setminus D.
		\end{cases}
		\]
		Since $V = 0$ near the boundary of $B^n(0, 1)$, $V_g = 0$ near the boundary of $D$ and extends smoothly to zero on all of $\mathbb{R}^n$.
		
		Let $\Phi: \mathbb{R} \times \mathbb{R}^n \to \mathbb{R}^n$ be the flow of the vector field $V_g$.
		The flow $\Phi$ on the image $D$ corresponds to the flow $\phi$ of $V$ on the domain via the embedding $g$; thus, we have
		\[
		\Phi(t, g(x)) = g(\phi(t, x)), \quad \text{for } x \in B^n(0, 1).
		\]
		For points $x \in B^n(0, r)$, the flow on the domain is explicitly given by $\phi(t, x) = e^{-t}x$. As $t \to \infty$, the set $\phi(t, B^n(0, r)) = B^n(0, e^{-t}r)$ shrinks to the origin $0$. By the continuity of $g$, the image $g(\phi(t, B^n(0, r)))$ shrinks to the point $p$. Therefore, for a sufficiently large $\bar{t} > 0$,
		\[
		\Phi(\bar{t}, g(B^n(0, r))) = g(\phi(\bar{t}, B^n(0, r))) \subset B^n(p, \delta).
		\]
		Then, the map $H(y) = \Phi(\bar{t}, y)$ is the required diffeomorphism satisfying conditions (1) and (2).
	\end{proof}	
	
	In view of the compression techniques (Lemmas \ref{isotopy compression lemma} and \ref{Transitivity Lemma}) and the local relocation method (Lemma \ref{Homogeneous lamma}) established above, we now demonstrate that a subset can be transported between arbitrary locations without intersecting other fixed sets. The following lemma guarantees the existence of such a transformation by constructing a ``safety tube'' along a path, and applying a finite sequence of local translations to achieve this.
	
	\begin{lemma}\label{Homogeneous of complement}
		Let $K \subset \mathbb{R}^n$ be a compact subset. Let $p, q \in \mathbb{R}^n \setminus K$. Suppose that $\mathbb{R}^n \setminus K$ is path-connected.
		Then, for any $\delta_2 > 0$, there exists a radius $\delta_1 > 0$ and a diffeomorphism $H: \mathbb{R}^n \to \mathbb{R}^n$ such that
		\begin{enumerate}
			\item $H(B^n(p, \delta_1)) \subset B^n(q, \delta_2)$.
			\item $H(x) = x$ for all $x \in K$.
		\end{enumerate}
	\end{lemma}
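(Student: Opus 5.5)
Since $\mathbb{R}^n\setminus K$ is open and path-connected, I will work with a continuous path $\gamma:[0,1]\to\mathbb{R}^n\setminus K$ from $p$ to $q$. Its image $\gamma([0,1])$ is compact and disjoint from the compact set $K$, so $\epsilon:=\dist(\gamma([0,1]),K)>0$. The tube $U=\{x:\dist(x,\gamma([0,1]))<\epsilon/2\}$ is then an open neighborhood of the path whose closure misses $K$; this is the ``safety tube''. The aim is to build $H$ as a composition of diffeomorphisms, each equal to the identity outside a ball contained in $U$, so that condition (2) holds automatically.

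\textbf{Chain of balls.} Fix $\rho=\min\{\epsilon/4,\delta_2\}$. By uniform continuity of $\gamma$, I choose a partition $0=t_0<t_1<\dots<t_m=1$ such that $\lVert\gamma(t_j)-\gamma(t_{j-1})\rVert<\rho/2$. Set $x_j=\gamma(t_j)$ and $B_j=B^n(x_j,\rho)$. Each $B_j$ lies in $U$, so it avoids $K$. Consecutive points satisfy $x_{j-1},x_j\in B_j$.

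\textbf{Construction.} I first apply Lemma \ref{isotopy compression lemma} after translating its centre to $p$, with radii $\delta_1<\delta'<\theta<\rho$. This gives a diffeomorphism $G$, supported in $B^n(p,\theta)\subset B_0$, with $G(B^n(p,\delta_1))\subset B^n(p,\delta')$. In fact it is simpler to choose $\delta_1$ small at the end and omit $G$. The point is to transport a tiny ball rigidly. Lemma \ref{Homogeneous lamma} gives only $H_j(x_{j-1})=x_j$, but its proof shows more. The flow is a pure translation by $t(x_j-x_{j-1})$ on the inner ball $B^n(x_j,\delta)$ where the cutoff equals $1$. So if a small ball $B^n(x_{j-1},s)$ together with its translate path stays inside $B^n(x_j,\delta)$, then $H_j$ maps $B^n(x_{j-1},s)$ exactly onto $B^n(x_j,s)$. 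To arrange this, take $\delta$ with $\rho/2+s<\delta<\rho$ and $s$ small, for example $s<\rho/4$. Then $H=H_m\circ\dots\circ H_1$ maps $B^n(p,s)$ onto $B^n(q,s)$, is the identity on $K$, and with $\delta_1=s\le\delta_2$ we get $H(B^n(p,\delta_1))\subset B^n(q,\delta_2)$.

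\textbf{Main obstacle.} The delicate step is keeping a whole small ball, not just a point, under control at each stage, so that the composed image lands in $B^n(q,\delta_2)$. I handle it by using the explicit translation structure inside the cutoff region, rather than relying only on the statement of Lemma \ref{Homogeneous lamma}. This requires checking that the segment-swept neighbourhood $\{x_{j-1}+t(x_j-x_{j-1})+v:\lVert v\rVert<s\}$ lies in $B^n(x_j,\delta)$. That holds because every point is within $\lVert x_j-x_{j-1}\rVert+s<\rho/2+s<\delta$ of $x_j$. If one prefers, a fallback is continuity: each $H_j$ is continuous, so one can choose $\delta_1$ backwards from $\delta_2$ through the finitely many maps.
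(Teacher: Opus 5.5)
Your proposal is correct and takes essentially the paper's route: a chain of balls along a path in $\mathbb{R}^n\setminus K$, Lemma \ref{Homogeneous lamma} applied in each ball, and the observation that each local map is an exact translation near its segment, so the composite translates a small ball about $p$ onto one about $q$. Your uniform-continuity partition and your explicit choice $\rho/2+s<\delta<\rho$ make the chain ordering and the ``composition of translations'' step more precise than the paper's finite-subcover argument. Your continuity fallback (using only $H(p)=q$) would also suffice on its own.
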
	
	
	\begin{proof}
		Since $\mathbb{R}^n \setminus K$ is path-connected, let $\alpha: [0, 1] \to \mathbb{R}^n \setminus K$ be a continuous path with $\alpha(0) = p$ and $\alpha(1) = q$. The image $\Gamma = \alpha([0, 1])$ is compact. Since $K$ is closed and disjoint from $\Gamma$, the distance between them is strictly positive. Let
		\[
		2\rho = \dist(\Gamma, K) = \inf \{ \lVert x - y\rVert \mid x \in \Gamma, y \in K \} > 0.
		\]
		We construct a ``chain of balls'' to move $p$ to $q$. For each point $z \in \Gamma$, consider the open ball $B(z, \rho)$. The collection $\{B(z, \rho)\}_{z \in \Gamma}$ is an open cover of the path $\Gamma$. By compactness of $\Gamma$, there exists a finite subcover $B_1, B_2, \dots, B_k$ where each $B_i = B(z_i, \rho)$. We order these balls such that $p \in B_1$, $q \in B_k$, and $B_i \cap B_{i+1} \neq \emptyset$.
		
		We choose a sequence of points $x_0, \dots, x_k$ such that $x_0 = p$, $x_k = q$, and $x_j \in B_j \cap B_{j+1}$ for $1 \le j < k$. By applying Lemma \ref{Homogeneous lamma} iteratively, for each $i \in \{1, \dots, k\}$, since both $x_{i-1}$ and $x_i$ lie in $B_i$, there exists a diffeomorphism $h_i$ supported in $B_i$ such that $h_i(x_{i-1}) = x_i$. 
		
		We define the composite diffeomorphism $H: \mathbb{R}^n \to \mathbb{R}^n$ by
		\[
		H = h_k \circ h_{k-1} \circ \dots \circ h_1.
		\]
		By construction, it follows immediately that $H(p) = x_k = q$.
		Furthermore, since $B_i \cap K = \emptyset$ by the choice of $\rho$, each map restricts to the identity on $K$; thus, $H|_K = \mathrm{id}$.
		
		In view of Lemma \ref{Homogeneous lamma}, each local diffeomorphism $h_i$ is constructed to act as a translation in a neighborhood of the line segment connecting $x_{i-1}$ and $x_i$. Since the composition of finite translations is a translation, there exists an open neighborhood $U$ of $p$ such that the restriction $H|_U$ acts as the translation $y \mapsto y + (q-p)$. This implies that $H$ is a local isometry on $U$. Since $U$ is open, we can choose a source radius $\delta_1 > 0$ sufficiently small such that $B^n(p, \delta_1) \subset U$ and $\delta_1 \le \delta_2$. It follows that $H$ maps the ball $B^n(p, \delta_1)$ onto $B^n(q, \delta_1)$, which is contained in $B^n(q, \delta_2)$. This completes the proof.
	\end{proof}
	
	In view of the previous lemmas, we now state and prove the main topological result of this section. By combining the compression capability established in Lemma \ref{Transitivity Lemma} and the transportation established in Lemma \ref{Homogeneous of complement}, we demonstrate that any finite collection of disjoint compact sets contained in disjoint open balls can be relocated into arbitrary target open balls.
	
	\begin{theorem}\label{thm:disjoint_mapping}
		Given compact subsets $K_1, \dots, K_m$ in $\mathbb{R}^n$. Assume there exist mutually disjoint balls $D_1, \dots, D_m \subset \mathbb{R}^n$ such that $K_i \subset D_i$ for $i=1, \dots, m$, and $D_i \cap D_j = \emptyset$ for $i \neq j$. Then, for any balls $B^n(x_i, r_i) \subset \mathbb{R}^n\setminus \bigcup_{j=1}^{m} D_j$ ($i=1, \dots, m$), there exists an ambient diffeomorphism $H : \mathbb{R}^n \to \mathbb{R}^n$, such that
		\[
		H(K_i) \subset B^n(x_i, r_i), \quad \textup{for all } i=1, \dots, m.
		\]
	\end{theorem}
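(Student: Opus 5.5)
The plan is to handle the sets one at a time: first shrink each $K_i$ into a tiny ball inside $D_i$, then carry these tiny balls one by one to the targets along paths that avoid everything already placed. The final map $H$ will be a composition of the diffeomorphisms produced by Lemmas \ref{Transitivity Lemma} and \ref{Homogeneous of complement}. Write $D_i = B^n(c_i,R_i)$.

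\textbf{Compression.} Since $K_i$ is compact in the open ball $D_i$, there is $r\in(0,1)$ with $K_i\subset B^n(c_i, rR_i)$. Apply Lemma \ref{Transitivity Lemma} to the embedding $g_i(x)=c_i+R_i x$ and a small $\delta>0$. This gives $G_i$ with $G_i(K_i)\subset B^n(c_i,\delta)$ and $G_i=\mathrm{id}$ off $D_i$. Because the $D_i$ are disjoint, the $G_i$ have disjoint supports, so $G=G_m\circ\cdots\circ G_1$ satisfies $G(K_i)\subset B^n(c_i,\delta)$ for every $i$.

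\textbf{Transport.} The hypothesis $B^n(x_i,r_i)\subset \mathbb{R}^n\setminus\bigcup_j D_j$, read as applying to all $i$ at once, makes the target balls disjoint from every $D_j$. I will also need the targets to be disjoint from one another (or at least to have distinct centers, so that they can be shrunk to disjoint balls). Then proceed inductively. At step $i$, let
\[
K^{(i)}=\bigcup_{j<i}\overline{B^n(x_j,\epsilon_j)}\ \cup\ \bigcup_{j>i}\overline{B^n(c_j,\delta)},
\]
where the radii $\epsilon_j$ of the already-placed balls are kept small. The set $K^{(i)}$ is a finite union of disjoint small closed balls. For $n\ge2$ its complement is path-connected and contains both $c_i$ and $x_i$, provided $\delta$ and the $\epsilon_j$ are small enough that $x_i\notin K^{(i)}$.

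Lemma \ref{Homogeneous of complement} then yields $\delta_1$ and $H_i$ with $H_i=\mathrm{id}$ on $K^{(i)}$ and $H_i(B^n(c_i,\delta_1))\subset B^n(x_i,\epsilon_i)$, where $\epsilon_i\le r_i$. Its proof shows that $H_i$ acts as a translation near $c_i$, which is what controls the radius.

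The main obstacle is the order of quantifiers. The $\delta_1$ produced by Lemma \ref{Homogeneous of complement} depends on $K^{(i)}$, and hence on $\delta$, while the compression must shrink to a radius below $\delta_1$. I would resolve this using the explicit structure of the lemma. The map $H_i$ translates a neighbourhood $U$ of $c_i$, and $U$ contains a ball about $c_i$ whose radius depends only on the chain of balls chosen around the path. The chain radius $\rho$ is controlled by $\operatorname{dist}(\Gamma,K^{(i)})$, and for a fixed path $\Gamma$ avoiding the points $c_j$, $j>i$, and the points $x_j$, $j<i$, this distance stays bounded below as $\delta,\epsilon_j\to0$.

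So I would fix the paths first, avoiding the finitely many centers (possible for $n\ge2$), and then choose $\delta$ and the $\epsilon_j$ small relative to those paths. If this gets messy, a cleaner alternative is to re-apply compression to each moved ball at every stage. By the lemma, $H_i$ maps $B^n(c_i,\delta_1)$ into the target. Before applying $H_i$, I would use Lemma \ref{Transitivity Lemma} again on a tiny ball around $c_i$ that avoids the other sets, pushing $G(K_i)$ into $B^n(c_i,\delta_1)$. That compression is supported in $B^n(c_i,\delta)$, so it leaves the other sets fixed.

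Finally set $H=H_m\circ\cdots\circ H_1\circ G$, where $G$ also includes the stagewise compressions if the alternative is used. Each $H_i$ fixes the earlier images and the not-yet-moved balls, so $H(K_i)\subset B^n(x_i,r_i)$ for all $i$. The case $n=1$ fails in general, because order along the line is preserved. I would note this and assume $n\ge2$, or require the target order to match the source order.
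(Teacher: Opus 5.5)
Your proposal is correct and is essentially the paper's argument. You compress each $K_i$ inside $D_i$ by Lemma~\ref{Transitivity Lemma}, then transport the compressed pieces one at a time by Lemma~\ref{Homogeneous of complement}, fixing the unmoved sources and the already placed images and choosing the final compression radius below the transport radius. Your stagewise-compression fix is exactly the paper's device of choosing $\bar r_i\le\delta_i$ after $\delta_i$ is known, so the path-based quantitative reworking of Lemma~\ref{Homogeneous of complement} is not needed.

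The only real difference is that you protect small buffer balls rather than the paper's $\mathcal{O}_i$, which contains all of $\overline{D_j}$ and $\overline{B^n(x_j,r_j)}$. This is an improvement: it makes path-connectedness of the complement automatic for $n\ge 2$ even when closures of sources and targets touch, whereas a ring of tangent closed disks around $D_i$ in $\mathbb{R}^2$ breaks the paper's claim. Your extra assumption of distinct target centers can also be dropped by transporting to distinct points $y_i\in B^n(x_i,r_i)$ instead of the $x_i$.
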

	
	\begin{proof}
		For each $i \in \{1, \dots, m\}$, since $D_i$ is a ball, let $D_i = f_i(\overline{B^n(0, 1)})$ for some smooth embedding $f_i$. Let $p_i = f_i(0)$ be the center of $D_i$.
		
		Consider a sufficiently small ball $B^n(p_i, \bar{r}_i) \subset D_i$ where $\bar{r}_i$ is small enough.
		By Lemma \ref{Transitivity Lemma}, there exists a diffeomorphism $H_i: \mathbb{R}^n \to \mathbb{R}^n$ such that $H_i(x) = x$ for $x \in \mathbb{R}^n \setminus D_i$ and $H_i(K_i) \subset B^n(p_i, \bar{r}_i)$.
		
		Note that since $K_i \subset D_i$, we can always find an intermediate set $\tilde{D}_i = f_i(B^n(0, \theta_i))$ with $\theta_i < 1$ such that $K_i \subset \tilde{D}_i$, and the compression maps $\tilde{D}_i$ into the small ball near $p_i$. Since the domains $D_1, \dots, D_m$ are mutually disjoint ($D_i \cap D_j = \emptyset$), the support of $H_i$ is disjoint from the support of $H_j$. We can define a global compression map $\hat{H}$ as the composition
		\[
		\hat{H} = H_m \circ H_{m-1} \circ \dots \circ H_1.
		\]
		Applying this map to the subsets, we have
		\[
		\hat{H}(K_i) = H_i(K_i) \subset B^n(p_i, \bar{r}_i), \quad \text{for } i=1, \dots, m.
		\]
		
		Now we need to move the compressed sets from $p_i$ to the target centers $x_i$. We construct a sequence of diffeomorphisms $H^i$ iteratively. Fix $i \in \{1, \dots, m\}$.
		To guarantee that the map $H^i$ restricts to the identity on $D_j$ for $j > i$ and on $B^n(x_j, r_j)$ for $j < i$, we define the compact set
		\[
		\mathcal{O}_i = \bigg( \bigcup_{j=i+1}^m \overline{D_j} \bigg) \cup \bigg( \bigcup_{j=1}^{i-1} \overline{B^n(x_j, r_j)} \bigg).
		\]
		Since $n \ge 2$, the space $\mathbb{R}^n \setminus \mathcal{O}_i$ is path-connected. The points $p_i$ and $x_i$ lie in this space. By Lemma \ref{Homogeneous of complement}, there exists a radius $\delta_i > 0$ and a diffeomorphism $H^i: \mathbb{R}^n \to \mathbb{R}^n$ fixing $\mathcal{O}_i$ (\textup{i.e.}, $H^i(x) = x$ for $x \in \mathcal{O}_i$) such that
		\[
		H^i(B^n(p_i, \delta_i)) \subseteq B^n(x_i, r_i).
		\]
		
		We choose the compression radius $\bar{r}_i$ such that $\bar{r}_i \le \min\big\{\delta_i, \frac{r_i}{2}\big\}$, and let
		\[
		\tilde{H} = H^m \circ \dots \circ H^1,
		\]
		we have
		\[
		\tilde{H}(B^n(p_i, \bar{r}_i)) \subseteq H^i(B^n(p_i, \delta_i)) \subseteq B^n(x_i, r_i), \quad \text{for } i=1, \dots, m.
		\]
		
		Finally, let the final diffeomorphism $H$ be the composition of compression and transportation
		\[
		H = \tilde{H} \circ \hat{H}.
		\]
		We have
		\[
		H(K_i) = \tilde{H}(\hat{H}(K_i)) \subset \tilde{H}(B^n(p_i, \bar{r}_i)) \subset B^n(x_i, r_i). \qedhere
		\]
	\end{proof}
	
	In fact, we have the following more general statement that gives an affirmative answer to Question \ref{q:embedding_width} in the introduction.
	
	\begin{theorem}\label{thm:embedding_rn1}
		Let $K_1, \dots, K_m$ be mutually disjoint compact sets in $\mathbb{R}^n$. Then, for any ball $B$ containing $\bigcup_{i=1}^m K_i$ in its interior, there exists a smooth embedding $E: B \to \mathbb{R}^{n+1}$ such that the images $E(K_1), \dots, E(K_m)$ can be relocated to arbitrary target domains in $\mathbb{R}^{n+1}$ by diffeomorphisms of $\mathbb{R}^{n+1}$.
	\end{theorem}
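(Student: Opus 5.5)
We plan to reduce the statement to Theorem \ref{thm:disjoint_mapping} applied in $\mathbb{R}^{n+1}$. It then suffices to build a smooth embedding $E: B \to \mathbb{R}^{n+1}$ whose images $E(K_1), \dots, E(K_m)$ lie in mutually disjoint $(n+1)$-balls $D_1, \dots, D_m \subset \mathbb{R}^{n+1}$. Once this is done, Theorem \ref{thm:disjoint_mapping} (with $n+1$ in place of $n$; the hypothesis $n+1 \ge 2$ used there for path-connectedness holds automatically) relocates them into any prescribed disjoint target balls in the complement of $\bigcup_j D_j$. For arbitrary target balls, possibly meeting the $D_j$, we would first translate or shrink the configuration of the $D_j$ by a global affine diffeomorphism so that the targets avoid it, and then apply the theorem.

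\textbf{Construction of $E$.} Write $E(x) = (x, h(x))$ for a smooth function $h: \mathbb{R}^n \to \mathbb{R}$. Any such graph map is a smooth embedding, being a proper injective immersion with left inverse the projection. Since the $K_i$ are mutually disjoint and compact, they have pairwise positive distance. Choose open neighborhoods $U_i \supset K_i$ with pairwise disjoint closures, e.g.\ $U_i$ the $\varepsilon$-neighborhood of $K_i$ with $3\varepsilon$ smaller than all pairwise distances. By the smooth Urysohn lemma, take $h$ smooth with $h \equiv i\,c$ on $\overline{U_i}$, where $c$ is a large constant, say $c > 2\,\mathrm{diam}(B)+2$. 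Then $E(K_i) \subset B \times \{ic\}$, which is a compact subset of the horizontal slab $S_i = \mathbb{R}^n \times (ic - 1, ic + 1)$. The slabs are pairwise disjoint, and in fact the $E(K_i)$ are already linearly separable by the hyperplanes $x_{n+1} = (i+\tfrac12)c$.

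\textbf{Enclosing balls.} Let $R$ be the radius of $B$ and $b$ its center. Put $D_i = B^{n+1}((b, ic), R')$ with $R < R' < c/2$. Then $D_i \supset B \times \{ic\} \supset E(K_i)$, and the $D_i$ are mutually disjoint because their centers are $c > 2R'$ apart. These are round balls, hence balls in the sense required by Theorem \ref{thm:disjoint_mapping} via an affine embedding of the closed unit ball. Applying that theorem finishes the argument.

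\textbf{Main obstacle.} The only real subtlety is the meaning of ``arbitrary target domains''. Theorem \ref{thm:disjoint_mapping} requires the target balls to lie outside $\bigcup D_j$. We handle this with the preliminary step described above: since $h$ is at our disposal, we can place the levels $ic$ far away, or compose with an affine map, so that any given finite family of disjoint target balls misses the $D_j$. Alternatively, one can first use a large translation in the $x_{n+1}$-direction, which is a diffeomorphism of $\mathbb{R}^{n+1}$, to move the $D_j$ away from the targets. We expect the remaining verifications, namely the embedding property and the smoothness of $h$, to be routine.
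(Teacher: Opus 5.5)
Your proposal is correct and follows essentially the paper's route. The paper's proof only points to Urysohn's lemma plus smooth approximation, and its proof of Example~\ref{ex:hopf_link} carries out your construction: a graph embedding $x\mapsto(x,C\psi(x))$ with $\psi$ constant on each set, disjoint enclosing balls in $\mathbb{R}^{n+1}$, and then Theorem~\ref{thm:disjoint_mapping}. Your smooth Urysohn step simply replaces the continuous-Urysohn-then-smoothing combination.

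Of your two ways of handling targets that meet the $D_j$, keep the translation in the $x_{n+1}$-direction rather than re-choosing the levels $ic$. The statement requires a single $E$ that works for all targets, and the translation leaves $E$ independent of them.
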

	
	\begin{proof}
		The proof is straightforward; the reader can complete the proof by virtue of Urysohn's Lemma and the approximation of continuous maps by smooth ones.
	\end{proof}
	
	\section{Linear separability by DNNs}\label{Linear separability by DNNs}
	As mentioned in the introduction, linear separability is a fundamental concept in machine learning, essential to both linear support vector machines and deep learning. While dealing with data that is nonlinearly separable, some methods such as the kernel method and feature engineering have been developed to make the data linearly separable. Deep neural networks (DNNs) are now the main method to transform nonlinearly separable data into linearly separable data; thus, how network structures like width and depth affect the performance of the designed DNNs is an active topic in the investigation of the theoretical foundation of DNNs.
	
	In this section, we study the minimum width of DNNs capable of transforming nonlinear separability to linear separability as applications of the results established in Section \ref{Relocation of Compact Sets via Diffeomorphisms}. Before discussing these applications, however, we introduce a fundamental result regarding the approximation capability of DNNs. Specifically, Hanin and Sellke \cite{hanin2017approximating} established the approximation of continuous functions using minimal-width ReLU networks, and Kidger and Lyons \cite{kidger2020universal} extended this to general non-polynomial activation functions. Subsequently, Teshima \textup{et al.} \cite{teshima2020coupling} showed that any diffeomorphism can be approximated by a composition of single coordinate transformations. Based on these works, Hwang \cite{hwang2023minimum} formulated the precise minimum width required for DNNs to uniformly approximate ambient diffeomorphisms. We now state this final form by Hwang \cite{hwang2023minimum}, which provides the crucial theoretical link allowing us to translate differential topological existence theorems into neural network constructions.	
	
	\begin{theorem}\label{thm:nn_is_invertible_approximator}
		Let $\sigma$ be a continuous function that is a $C^{1}$-function near $\alpha \in \mathbb{R}$ with $\sigma^{\prime}(\alpha) \ne 0$. Then, for a natural number $d \in \mathbb{N}$, the class of deep neural networks with input and output dimensions $d$ and width $d+\alpha(\sigma)$ is capable of approximating any $C^2$-diffeomorphism of $\mathbb{R}^d$ uniformly on any compact set, where 
		\[
		\alpha(\sigma) = \begin{cases}
			0 & \textup{if } \sigma = \textup{Leaky-ReLU}, \\ 
			1 & \textup{if } \sigma = \textup{ReLU}, \\
			2 & \textup{otherwise}.
		\end{cases}
		\]
	\end{theorem}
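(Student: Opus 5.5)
Since this result is quoted from Hwang \cite{hwang2023minimum}, the plan below reconstructs the argument from the cited ingredients. The idea is to factor the diffeomorphism into elementary pieces and then realize each piece with a narrow network. The order of steps is as follows.

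\textbf{Step 1 (factorization).} Fix a $C^2$-diffeomorphism $f$ of $\mathbb{R}^d$, a compact set $K$ and $\epsilon>0$. I would invoke the result of Teshima et al.\ \cite{teshima2020coupling}. It says that $f$ is compactly approximated by finite compositions $\tau_N\circ\cdots\circ\tau_1$. Each $\tau_j$ is either an invertible affine map or a single-coordinate transformation
\[
(x_1,\dots,x_d)\mapsto (x_1,\dots,x_{d-1},\,x_d+u(x_1,\dots,x_{d-1}))
\]
up to permutation of coordinates, with $u$ continuous.

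Because compositions of maps that are uniformly continuous on compacta are stable under uniform perturbation, it then suffices to approximate each $\tau_j$ separately. The tolerances are chosen backwards so that the errors propagate to at most $\epsilon$ on $K$. Affine layers are free, since the $T_i$ absorb them.

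\textbf{Step 2 (one single-coordinate map in width $d+\alpha(\sigma)$).} The network must carry the $d$ coordinates unchanged through each layer while building $u$ in the last coordinate, and it may use $\alpha(\sigma)$ extra neurons. Transmitting a coordinate through an activation is done by cases.
\begin{itemize}
\item Leaky-ReLU: the activation is a bijective piecewise-linear map, so it can be inverted exactly by the next affine layer combined with a further activation. In particular, $\mathrm{Leaky\textup{-}ReLU}_\alpha(t)$ and $\mathrm{Leaky\textup{-}ReLU}_\alpha(-t)$ combine linearly to recover $t$, which lets a coordinate be routed through a hidden unit losslessly.
\item ReLU: each coordinate is shifted to be positive on the compact set, where ReLU acts as the identity.
\item General $\sigma$: I would use the local inverse at $\alpha$. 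The map $t\mapsto (\sigma(\alpha+\eta t)-\sigma(\alpha))/(\eta\sigma'(\alpha))$ tends to $t$ uniformly on compacta as $\eta\to 0$.
\end{itemize}
The function $u$ is then built additively into $x_d$ in the register style of Kidger--Lyons \cite{kidger2020universal} and Hanin--Sellke \cite{hanin2017approximating}. One writes $u\approx\sum_k c_k\sigma(w_k\cdot x'+b_k)$ by universal approximation and adds one term per group of layers. The extra $\alpha(\sigma)$ neurons serve as a scratch unit computing $\sigma(w_k\cdot x'+b_k)$ (and for general $\sigma$ one more unit to keep the accumulator in the near-linear regime). For Leaky-ReLU, the scratch term can be folded into $x_d$ itself. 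One uses $x_d\mapsto x_d+c\,\mathrm{Leaky\textup{-}ReLU}(w\cdot x'+b)$, which is an invertible triangular map, and realizes it by ping-ponging $x_d$ through its own activation.

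\textbf{Step 3.} Finally, concatenate the subnetworks and collapse adjacent affine maps into single layers. The width stays $d+\alpha(\sigma)$, and the error estimate from Step 1 gives uniform approximation on $K$.

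\textbf{Main obstacle.} I expect the hardest part to be the Leaky-ReLU case with zero extra width. There the nonlinear term must be injected into $x_d$ using only $d$ neurons, which requires showing that maps of the form $x_d\mapsto x_d+c\,\mathrm{Leaky\textup{-}ReLU}(w\cdot x'+b)$, composed with affine maps, are dense among triangular maps. I would first attempt this through a one-dimensional lemma: any increasing piecewise-linear function is a composition of Leaky-ReLUs and affine maps. That lemma would then be lifted to the triangular setting.
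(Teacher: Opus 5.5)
The paper does not prove this statement (it is quoted from Hwang), so your reconstruction has to stand on its own, and Step~1 fails. The class you reduce to is not dense among $C^2$-diffeomorphisms. That class consists of invertible affine maps together with additive maps $(x',x_d)\mapsto(x',x_d+u(x'))$, up to permutation.

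Each additive map preserves Lebesgue measure $\lambda$, since it is a translation on every fibre $\{x'\}\times\mathbb{R}$. So every finite composition $g$ satisfies $\lambda(g(E))=c_g\,\lambda(E)$ for a constant $c_g$. Now suppose such maps $g_k$ converge uniformly to a diffeomorphism $f$ on a closed ball $\overline{B}$, and set $\epsilon_k=\sup_{\overline{B}}\lVert g_k-f\rVert\to 0$. A degree argument shows that $g_k(B)$ contains $f(B)$ minus the $\epsilon_k$-neighbourhood of $f(\partial B)$, and lies in the $\epsilon_k$-neighbourhood of $f(\overline{B})$. Hence $c_{g_k}\lambda(B')\to\lambda(f(B'))$ for every ball $B'$ inside $B$, so $\lvert\det Df\rvert$ is constant. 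In particular, $(x',x_d)\mapsto(x',x_d+x_d^3)$ cannot be approximated, and for $d=1$ your class consists of affine maps only. The single-coordinate transformations of Teshima et al.\ are $x\mapsto(x',\tau_d(x',x_d))$ with $\tau_d$ monotone but genuinely nonlinear in $x_d$, and that nonlinearity cannot be dropped.

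This also makes the target of your ``main obstacle'' false. Compositions of affine maps and $x_d\mapsto x_d+c\,\mathrm{Leaky\textup{-}ReLU}(w\cdot x'+b)$ scale volume by a constant, so they are not dense among triangular maps, and lifting the one-dimensional PL lemma cannot change that. With the corrected target, ReLU and general $\sigma$ are straightforward. Carry all $d$ inputs, build $\tau_d(x)$ as a function of all $d$ variables in the extra register(s), and output $(x',\text{register})$. This uses Hanin--Sellke in width $d+1$ and Kidger--Lyons in width $d+2$. Accumulating in place into $x_d$, as you propose, no longer works directly, because each new term would see an already-modified $x_d$.

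For Leaky-ReLU in width $d$, the missing ingredient is a volume-changing generator whose kink location depends on $x'$. Your shears are in fact exactly realizable. For $\langle a,v\rangle=1$, the conjugated one-coordinate layer is $x\mapsto x+(\alpha-1)\min(0,\langle a,x\rangle-b)\,v$, where $\alpha$ is the Leaky-ReLU slope. Composing it (with $v=v_1$) with the analogous inverse layer (with $v=v_2$) gives the shear $x\mapsto x+(\alpha-1)\min(0,\langle a,x\rangle-b)(v_1-v_2)$, where $v_1-v_2\perp a$. Conjugating $x_d\mapsto\mathrm{Leaky\textup{-}ReLU}(x_d)$ by a shear $S_h(x',x_d)=(x',x_d+h(x'))$ then yields $(x',\mathrm{Leaky\textup{-}ReLU}(x_d+h(x'))-h(x'))$. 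This is a kink along the graph $x_d=-h(x')$. What must be proved is that compositions of these graph-kink maps are uniformly dense among monotone triangular maps; fibrewise this is your PL lemma, with kink positions varying continuously in $x'$.

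Finally, your expansion $u\approx\sum_k c_k\sigma(w_k\cdot x'+b_k)$, and Kidger--Lyons at width $d+2$, both need $\sigma$ non-polynomial. The stated hypothesis does not give this; as printed, it even admits affine $\sigma$, for which the statement is false.
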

	
	Recently, Yang \textup{et al.} \cite{yang2025minimum} extended this universal approximation property to other activation functions, proving that deep neural networks with ELU or SELU activation functions have the same approximation capabilities as those with Leaky-ReLU. Consequently, the minimum width guarantee for approximating ambient diffeomorphisms naturally holds for ELU networks as well.
	
	Building upon the results of Hwang \cite{hwang2023minimum} and Yang \textup{et al.} \cite{yang2025minimum}, we now establish the neural network realization of our main topological result, Theorem \ref{thm:disjoint_mapping}. The following lemma confirms that a deep neural network of width $n$ with either Leaky-ReLU, ELU, or SELU activation function is sufficient to relocate disjoint compact datasets into arbitrary target open balls.
	
	\begin{lemma}\label{thm:DNN_relocation}
		Let $K_1, \dots, K_m$ be compact datasets in $\mathbb{R}^n$ (where $n \ge 2$) satisfying the assumptions of Theorem \ref{thm:disjoint_mapping} (\textup{i.e.}, they are contained in mutually disjoint balls).
		Then, for any target balls $B^n(x_i, r_i)$ disjoint from the initial domains, there exists a width-$n$ Deep Neural Network (DNN) using Leaky-ReLU, ELU, or SELU activation function, such that
		\[
		\Phi(K_i) \subset B^n(x_i, r_i), \quad \textup{for all } i=1, \dots, m.
		\]
	\end{lemma}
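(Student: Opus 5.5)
The plan is to combine Theorem \ref{thm:disjoint_mapping} with the universal approximation result, Theorem \ref{thm:nn_is_invertible_approximator}, and its extension to ELU/SELU by Yang \textup{et al.}, using a small slack in the target radii to absorb the approximation error.

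\textbf{Shrink the targets.} Choose $r_i' = r_i/2$. The balls $B^n(x_i, r_i') \subset B^n(x_i,r_i)$ still lie in $\mathbb{R}^n\setminus\bigcup_j D_j$. Applying Theorem \ref{thm:disjoint_mapping} to the smaller targets gives a smooth ambient diffeomorphism $H:\mathbb{R}^n\to\mathbb{R}^n$ with $H(K_i)\subset B^n(x_i, r_i/2)$ for all $i$.

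\textbf{Check regularity.} $H$ must be a $C^2$-diffeomorphism of $\mathbb{R}^n$ for the approximation theorem to apply. In the paper's construction, $H$ is a finite composition of time-$t$ maps of flows of smooth compactly supported vector fields (Lemmas \ref{Transitivity Lemma}, \ref{Homogeneous lamma}, \ref{Homogeneous of complement}). It is therefore a $C^\infty$ diffeomorphism, equal to the identity outside a compact set. Hence $H\in \mathrm{Diff}^\infty_c(\mathbb{R}^n)\subset \mathcal{D}^2(\mathbb{R}^n)$.

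\textbf{Approximate by a network.} Let $K=\bigcup_i K_i$, which is compact. For Leaky-ReLU, $\sigma$ is $C^1$ near any $\alpha>0$ with $\sigma'(\alpha)=1\neq 0$, and $\alpha(\sigma)=0$. So Theorem \ref{thm:nn_is_invertible_approximator} gives, for $\epsilon=\min_i r_i/2$, a DNN $\Phi:\mathbb{R}^n\to\mathbb{R}^n$ of width $n$ with
\[
\sup_{x\in K}\lVert \Phi(x)-H(x)\rVert<\epsilon .
\]
For ELU and SELU, the result of Yang \textup{et al.} \cite{yang2025minimum} gives the same width-$n$ approximation property, so the same $\Phi$ exists.

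\textbf{Conclude by the triangle inequality.} For $x\in K_i$,
\[
\lVert \Phi(x)-x_i\rVert \le \lVert \Phi(x)-H(x)\rVert+\lVert H(x)-x_i\rVert < r_i/2+r_i/2=r_i,
\]
so $\Phi(K_i)\subset B^n(x_i,r_i)$.

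\textbf{Main obstacle.} The delicate step is verifying the hypotheses of the approximation results rather than the estimate itself. First, Theorem \ref{thm:nn_is_invertible_approximator} is stated for the class of activations, and we need width exactly $d+\alpha(\sigma)=n$ for the chosen $\sigma$; for ELU/SELU this rests entirely on the cited extension. Second, we must confirm that $H$ is a genuine global $C^2$-diffeomorphism, which I would justify via the compact-support flow construction above. A minor additional point is that Theorem \ref{thm:disjoint_mapping} implicitly uses $n\ge 2$ for path-connectedness of complements; this is exactly why the lemma assumes $n\ge2$.
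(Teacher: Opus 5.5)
Your proof is correct and follows the paper's argument: obtain $H$ from Theorem \ref{thm:disjoint_mapping}, then use Theorem \ref{thm:nn_is_invertible_approximator} (and Yang et al.\ \cite{yang2025minimum} for ELU/SELU) to approximate $H$ uniformly on $\bigcup_i K_i$ by a width-$n$ network, with enough margin that the images stay inside the target balls. The differences are minor. You build the margin by relocating into the half-radius balls $B^n(x_i, r_i/2)$ and taking $\epsilon=\min_i r_i/2$, whereas the paper uses $\mu=\min_i \dist(H(K_i),\partial B^n(x_i,r_i))>0$. Your explicit check that $H$ is a compactly supported $C^\infty$ diffeomorphism is a useful detail that the paper leaves implicit.
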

	
	\begin{proof}
		By Theorem \ref{thm:disjoint_mapping}, there exists a global diffeomorphism $H: \mathbb{R}^n \to \mathbb{R}^n$ such that $H(K_i) \subset B^n(x_i, r_i)$ for all $i=1, \dots, m$.
		
		The set $H(K_i)$ is a compact subset strictly contained in the open ball $B^n(x_i, r_i)$. Let $\mu$ be the minimum distance between the boundary $\partial B^n(x_i, r_i)$ and the set $H(K_i)$, given by
		\[
		\mu = \min_{i=1,\dots,m} \dist(H(K_i), \partial B^n(x_i, r_i)) > 0.
		\]
		
		By virtue of Theorem \ref{thm:nn_is_invertible_approximator}, for any $\epsilon < \mu$, there exists a width-$n$ DNN $\Phi$ with Leaky-ReLU, ELU, or SELU activation function that uniformly approximates the diffeomorphism $H$ on the compact set $\bigcup_{i=1}^m K_i$.
		Specifically, we can find $\Phi$ satisfying
		\[
		\lVert\Phi(x) - H(x)\rVert < \epsilon, \quad \textup{for all } x \in \bigcup_{i=1}^m K_i.
		\]
		Therefore, we have
		\[
		\Phi(K_i) \subset B^n(x_i, r_i), \quad \textup{for all } i=1, \dots, m. \qedhere
		\]
	\end{proof}
	
	A direct consequence of Lemma \ref{thm:DNN_relocation} is that a non-linearly separable dataset can be transformed into a linearly classifiable one. By mapping data points associated with distinct labels into mutually disjoint open balls arranged in a linearly separable configuration, the dataset becomes linearly classifiable by a neural network.
	
	\begin{theorem}\label{thm:linear_classifiability}
		Let $D \subset \mathbb{R}^n$ (where $n \ge 2$) be a compact dataset with $l$ distinct labels. Assume there exist mutually disjoint open balls $D_1, \dots, D_m \subset \mathbb{R}^n$ ($m \ge l$) such that
		\begin{enumerate}
			\item $D \subset \bigcup_{i=1}^{m} D_i$.
			\item For each $i$, the subset $D \cap D_i$ contains data of only one label.
		\end{enumerate}
		Then, there exists a width-$n$ Deep Neural Network (DNN) $\Psi: \mathbb{R}^n \to \mathbb{R}^n$ using Leaky-ReLU, ELU, or SELU activation function such that the transformed dataset $\Psi(D)$ is linearly classifiable.
	\end{theorem}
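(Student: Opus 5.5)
The plan is to reduce Theorem \ref{thm:linear_classifiability} to Lemma \ref{thm:DNN_relocation} by choosing target balls in a linearly separable configuration. Set $K_i = D \cap \overline{D_i'}$, where $D_i'$ is slightly smaller than $D_i$. Because $D$ is compact and the open balls $D_i$ are mutually disjoint, each $D\cap D_i$ is relatively open and closed in $D$. Hence $K_i = D\cap D_i$ is itself compact: a limit point of $D\cap D_i$ lies in $D\subset\bigcup_j D_j$, and it cannot lie in $D_j$ for $j\neq i$ since $D_j$ is open and disjoint from $D_i$. So $K_1,\dots,K_m$ are compact, pairwise disjoint, and $K_i\subset D_i$ with the $D_i$ disjoint balls, which is exactly the hypothesis of Theorem \ref{thm:disjoint_mapping}. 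Discard the indices with $K_i=\emptyset$.

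Next I would choose the targets. Let $R$ be large enough that $\bigcup_i D_i \subset B^n(0,R)$. For each label $c\in\{1,\dots,l\}$, form the slab $S_c=\{x: 3cR' < x_1 < 3cR'+R'\}$ with $R' \gg R$. Inside $S_c$ I place, outside $B^n(0,R)$, one small ball $B^n(x_i,r_i)$ for each $i$ whose $K_i$ carries label $c$. These balls must be pairwise disjoint; put them along the $x_2$ direction, which is possible since $n\ge 2$. All of them then lie in $\mathbb{R}^n\setminus\bigcup_j D_j$. Lemma \ref{thm:DNN_relocation} gives a width-$n$ DNN $\Psi$ with the stated activation such that $\Psi(K_i)\subset B^n(x_i,r_i)$. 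Since $D=\bigcup_i K_i$, each label class of $\Psi(D)$ lies in its own slab $S_c$. The parallel hyperplanes $x_1 = 3cR'+2R'$, for $c=1,\dots,l-1$, separate consecutive classes, so $\Psi(D)$ is linearly classifiable. Equivalently, the affine functionals $x\mapsto x_1$ together with thresholds classify it.

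The main obstacle is mild. One issue is verifying that the $K_i$ are compact and separated, which I handled above using the open-ball disjointness. The other is ensuring the targets avoid the $D_j$ while staying in a linear configuration. Taking the slabs far away along $x_1$ handles the second issue, because the $D_i$ are bounded (they are balls) and so the targets can be placed outside them.
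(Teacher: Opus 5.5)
Your proposal is correct and follows essentially the same route as the paper: take $K_i = D\cap D_i$, place the targets for each label in a linearly separable configuration away from $\bigcup_j D_j$, and invoke Lemma~\ref{thm:DNN_relocation}. The paper uses unit balls centred at $p_1+(j-1)v$ with small disjoint sub-balls as targets, while you use parallel slabs. Your version is slightly more careful, since it proves that $D\cap D_i$ is compact (it is clopen in $D$) and ensures every target avoids the $D_j$, two points the paper leaves implicit.
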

	
	\begin{proof}
		Let the distinct labels be denoted by $j \in \{1, \dots, l\}$. We first construct $l$ mutually disjoint target open balls corresponding to these labels. Since the finite union of source domains $\bigcup_{k=1}^m D_k$ is bounded, we can choose a base point $p_1 \in \mathbb{R}^n$ sufficiently far from the origin such that $B^n(p_1, 1) \cap D_k = \emptyset$ for all $k$. Consider a vector $v \in \mathbb{R}^n$ with $\lVert v\rVert > 2$. Define a sequence of centers $p_j$ along the direction of $v$ as follows:
		\[
		p_j = p_1 + (j-1)v, \quad j=1, \dots, l.
		\]
		Consider the balls $B_j = B^n(p_j, 1)$ for $j=1, \dots, l$. Clearly all $B_j$ are mutually disjoint and can be classified by a linear function $L: \mathbb{R}^n \to \mathbb{R}$. 
		
		Now, we assign each input ball $D_i$ to a target region based on its label. For each $i \in \{1, \dots, m\}$, let $y_i \in \{1, \dots, l\}$ be the label associated with compact subset $K_i = D \cap D_i$. For any label $j$, if there are multiple input balls $D_i$ having label $j$, we can choose enough small disjoint sub-balls within $B_j$ as their specific targets.
		
		Thus, for each $i=1, \dots, m$, we can assign a unique target ball $T_i$ such that
		\[
		T_i \subset B_{y_i} \subset \mathbb{R}^n \setminus \bigcup_{k=1}^m D_k.
		\]
		By Lemma \ref{thm:DNN_relocation}, there exists a width-$n$ DNN $\Psi$ such that
		\[
		\Psi(K_i) \subset T_i \subset B_{y_i}.
		\]
		Consequently, all data points with label $j$ are mapped into $B_j$. Since the sets $\{B_1, \dots, B_l\}$ are linearly separable, the transformed dataset $\Psi(D)$ is linearly classifiable.
	\end{proof}
	
	While the previous results focus on relocating disjoint compact sets, many data structures in machine learning are modeled as low-dimensional manifolds embedded in a high-dimensional space. To approximate smooth embeddings of such manifolds via ambient diffeomorphisms, we first require a theoretical guarantee that a local embedding of a ball can be extended to a global diffeomorphism of the ambient space. This is provided by the following lemma derived from Palais \cite{palais1960extending}.
	
	\begin{lemma}[Extending Diffeomorphism]\label{lem:palais_extension}
		Let $D \subset \mathbb{R}^m$ be a compact differentiable $k$-ball. Suppose $g: U \to \mathbb{R}^m$ is a smooth embedding defined on an open neighborhood $U$ of $D$. 
		Then, there exists a global diffeomorphism $G: \mathbb{R}^m \to \mathbb{R}^m$ such that
		\[
		G(x) = g(x), \quad \forall x \in D.
		\]
	\end{lemma}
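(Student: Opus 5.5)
The plan is to reduce the statement to the isotopy extension theorem, in the form that an isotopy of a compact submanifold (here a compact differentiable $k$-ball) extends to a compactly supported ambient isotopy; this is exactly Palais' result \cite{palais1960extending}. Write $D=\psi(\overline{B^k})$ with $\psi$ a smooth embedding of a neighborhood of $\overline{B^k}$. The first step is to connect $g|_D$ to the inclusion of $D$ through a smooth family of embeddings of a slightly smaller neighborhood of $D$. I would do this in two stages.

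\emph{Stage one: shrinking.} Pick a point $p=\psi(0)\in D$. Consider the family $g_t$ for $t\in(0,1]$, defined near $D$ by transporting along $\psi$:
\[
g_t(\psi(u)) = \frac{g(\psi(tu)) - g(p)}{t} + g(p).
\]
This is only defined on the $k$-ball itself, so instead I would work directly with the embedding $g\circ\psi$ on a neighborhood of $\overline{B^k}$ in $\mathbb{R}^k$, thickened to a tubular neighborhood to get an $m$-dimensional domain. The Alexander-trick-type rescaling $t\mapsto g(p)+\frac{1}{t}\bigl(g(p+t(x-p))-g(p)\bigr)$, with $x$ in a small convex neighborhood of $p$, is an isotopy of embeddings. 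As $t\to 0$ it converges smoothly to the affine map $x\mapsto g(p)+Dg(p)(x-p)$. Since $D$ is a $k$-ball, it is contractible along $\psi$, so the rescaling can be performed along the radial structure of $\psi$. This carries $g|_D$ to the linear map $Dg(p)$ applied to a small copy of $D$ near $p$, up to translation.

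\emph{Stage two: linear part.} The linear part $Dg(p)$ is invertible. If it has positive determinant, it is joined to the identity within $GL^+(m)$. If its determinant is negative, then because $D$ has dimension $k<m$ (for $k=m$ I would instead compose with a reflection fixing $D$'s tangent data; I would check this case separately, since it is where a genuine orientation obstruction could appear), one can choose a reflection in a direction transverse to $T_pD$ that fixes $D$ to first order. After this correction, translation and linear paths give a smooth isotopy of embeddings of a neighborhood of $D$ from the inclusion to $g$.

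\emph{Extension and the main obstacle.} Given this isotopy $g_s$, define the time-dependent vector field $X_s=(\partial_s g_s)\circ g_s^{-1}$ on the compact trace $\bigcup_s g_s(D')$, where $D'$ is a compact neighborhood of $D$. Extend it by a cutoff, exactly as in Lemma~\ref{Transitivity Lemma}, to a compactly supported smooth field on $[0,1]\times\mathbb{R}^m$. Its time-one flow $G$ is a global diffeomorphism, and by uniqueness of integral curves it satisfies $G=g$ on $D$. The main obstacle is the orientation issue in stage two when $k=m$ and $g$ reverses orientation. There the statement as written seems false for compactly supported diffeomorphisms, but it remains true for arbitrary diffeomorphisms. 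I would handle it by first composing with a global reflection $R$ of $\mathbb{R}^m$, applying the argument to $R\circ g$, and finally composing with $R$ again.
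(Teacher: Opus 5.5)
The paper gives no argument for this lemma beyond attributing it to Palais~\cite{palais1960extending}. Your outline (Alexander-trick linearization, joining the linear parts, then integrating a compactly supported time-dependent vector field) is the standard proof of that disk theorem, so you are reconstructing the cited result rather than taking a new route. Your handling of orientation improves on the paper. Palais' theorem assumes $g$ is orientation-preserving when $k=m$, and the paper's statement drops this hypothesis. Extending $R\circ g$ to some $F$ and setting $G=R\circ F$ is exactly what makes the lemma true as stated; as you note, such a $G$ can then never be compactly supported.

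One step does not work as written: the case $k<m$ with $\det Dg(p)<0$. You isotope embeddings of an $m$-dimensional neighborhood of $D$, which is what lets you extend $X_s$ by a mere cutoff as in Lemma~\ref{Transitivity Lemma}. But the sign of $\det DF_s$ is constant along an isotopy of codimension-zero embeddings of a connected set, so no such isotopy can join the inclusion to an orientation-reversing $g$. Moreover, a reflection fixing $D$ only to first order moves a curved $D$, so the resulting $G$ would agree with $g\circ R$, not with $g$, on $D$.

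The fix is to make the correction exactly, and before linearizing:
\begin{enumerate}
\item Take a tubular parametrization $\Psi\colon B^k(0,1+\epsilon)\times B^{m-k}(0,\epsilon)\to U$ with $\Psi(u,0)=\psi(u)$; this exists because the normal bundle over a ball is trivial.
\item Let $r$ reverse one normal coordinate. Then $g\circ\Psi\circ r$ equals $g\circ\psi$ on $B^k(0,1+\epsilon)\times\{0\}$, so changing it does not change the values on $D$.
\item Run the rescaling $F_t(y)=F(0)+t^{-1}\bigl(F(ty)-F(0)\bigr)$, which extends smoothly to $t=0$ by $F(0)+DF(0)y$, on both $F=\Psi$ and $F=g\circ\Psi\circ r$ over this star-shaped domain. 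Do not use a convex neighborhood of $p$ in $\mathbb{R}^m$, which need not contain $D$.
\item Join $D\Psi(0)$ to $D(g\circ\Psi\circ r)(0)$ in $GL(m)$; they now lie in the same component.
\end{enumerate}

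Alternatively, keep everything $k$-dimensional and linearize only $\psi$ and $g\circ\psi$. Then no correction is needed, since injective linear maps $\mathbb{R}^k\to\mathbb{R}^m$ form a connected space for $k<m$. The cost is that $X_s$ is defined only on a $(k+1)$-dimensional space-time trace in $[0,1]\times\mathbb{R}^m$. It must then be extended by the extension lemma for closed submanifolds, not by a cutoff.
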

	
	Consider a differentiable $k$-ball in $M$ in terms of Definition \ref{def:k_ball}. By combining the extending diffeomorphism guaranteed by Lemma \ref{lem:palais_extension} with the universal approximation property of Theorem \ref{thm:nn_is_invertible_approximator}, we can now establish that DNNs are capable of approximating smooth embeddings of differentiable balls. This result provides a theoretical foundation for manifold learning tasks, such as unfolding low-dimensional data manifolds.
	
	\begin{theorem}\label{thm:embedding_approx}
		Let $D \subset \mathbb{R}^n$ be a $k$-dimensional differentiable ball (where $k < n$), and let $\phi: D \to \mathbb{R}^n$ be a smooth embedding.
		Then, for any given $\epsilon > 0$, there exists a width-$n$ Deep Neural Network (DNN) $\Phi$ using Leaky-ReLU, ELU, or SELU activation function, such that
		\[
		\lVert \Phi(x) - \phi(x) \rVert < \epsilon, \quad \forall x \in D.
		\]
	\end{theorem}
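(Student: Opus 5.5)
The plan is to combine Lemma \ref{lem:palais_extension} with Theorem \ref{thm:nn_is_invertible_approximator}, exactly as the paragraph preceding the statement suggests.

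\textbf{Step 1: extend $\phi$ to an open neighbourhood of $D$.} By Definition \ref{def:smooth_map}, $\phi$ is the restriction of a smooth map $\tilde\phi$ defined on an open set $U \supset D$. I must make sure $\tilde\phi$ is an embedding after possibly shrinking $U$.

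Since $\phi$ is an embedding of $D$, I read this as saying $d\tilde\phi$ is injective on $TD$. Whether injectivity extends to all of $T_x\mathbb{R}^n$ needs thought. Because $D = \psi(\overline{B^k})$ for an embedding $\psi$ of a neighbourhood of $\overline{B^k}$, I can thicken $D$ into a tubular neighbourhood $N \cong D \times \mathbb{R}^{n-k}$.

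I then redefine the extension on $N$ as
\[
\tilde\phi(d,v) = \phi(d) + \sum_{j} v_j\, \nu_j(d),
\]
where $\nu_j$ is a smooth normal frame to $\phi(D)$. Such a frame exists because $D$ is contractible, so the normal bundle of $\phi(D)$ is trivial. The differential of this map at $v=0$ is an isomorphism. An injective immersion on the compact set $D$ is then injective on a small neighbourhood, by the standard argument from the tubular neighbourhood theorem. This gives a smooth embedding $g \colon U \to \mathbb{R}^n$ with $U$ an open neighbourhood of $D$ and $g|_D = \phi$.

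\textbf{Step 2: apply Lemma \ref{lem:palais_extension} with $m=n$.} This yields a global diffeomorphism $G \colon \mathbb{R}^n \to \mathbb{R}^n$ with $G|_D = \phi$. It is smooth, hence in particular $C^2$.

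\textbf{Step 3: approximate $G$ by a network.} Apply Theorem \ref{thm:nn_is_invertible_approximator} with $d=n$ and $\sigma$ Leaky-ReLU, so $\alpha(\sigma)=0$. For ELU and SELU, use the extension of Yang et al.\ \cite{yang2025minimum} quoted above. This gives a width-$n$ network $\Phi$ with $\sup_{x\in D}\lVert \Phi(x)-G(x)\rVert < \epsilon$ on the compact set $D$. Since $G=\phi$ on $D$, this is the claimed estimate.

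\textbf{Main obstacle.} I expect Step 1 to be the hardest part: producing the embedding $g$ of a full-dimensional neighbourhood. The key inputs are the triviality of the normal bundle over the contractible ball and the compactness argument for injectivity near $D$.

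If one wanted to avoid this, an alternative is to note that the tangent-plus-normal data along $\phi(D)$ can be chosen to be orientation-preserving, after flipping one normal vector if needed. That choice is needed anyway, since Palais' extension requires $g$ to be isotopic to an orientation-preserving map. Steps 2 and 3 are then direct citations.
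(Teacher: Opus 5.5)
Your proposal is correct and has the same three-step skeleton as the paper's proof: an embedding of a neighbourhood of $D$, then Lemma~\ref{lem:palais_extension} with $m=n$, then Theorem~\ref{thm:nn_is_invertible_approximator} (with Yang et al.\ for ELU/SELU).

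The one real difference is Step~1, and there your version is the sound one. The paper takes the extension $\tilde\phi$ supplied by Definition~\ref{def:smooth_map} and asserts that $\tilde\phi$ restricted to some smaller neighbourhood $V\supset D$ is an embedding. That is false in general when $k<n$: an extension that is constant in the normal directions has rank $k$ along $D$. Your construction $\phi(d)+\sum_j v_j\nu_j(d)$, built from a normal frame over the contractible ball, together with the compactness argument for injectivity, actually produces the neighbourhood embedding agreeing with $\phi$ on $D$ that Lemma~\ref{lem:palais_extension} needs.

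Two small adjustments:
\begin{itemize}
\item The tubular neighbourhood should be built over a slightly larger open $k$-ball $\psi(B^k(0,1+\eta))$, not over $D$ itself, so that $N$ is open in $\mathbb{R}^n$. On such a ball the extension of $\phi$ is still an injective immersion, by openness of the immersion condition and compactness of $D$.
\item Your closing orientation remark is unnecessary. For $k<n$, Palais' theorem has no orientation hypothesis, because the space of $k$-frames in $\mathbb{R}^n$ is connected. The lemma as quoted in the paper has none either. Flipping a normal vector is harmless but not needed.
\end{itemize}
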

	
	\begin{proof}
		Since $D$ is a $k$-dimensional differentiable ball, it is a compact subset of $\mathbb{R}^n$. By the Definition \ref{def:smooth_map} of a smooth map on a compact set, there exists an open neighborhood $U$ of $D$ and a smooth map $\tilde{\phi}: U \to \mathbb{R}^n$ such that $\tilde{\phi}|_D = \phi$. Since $\phi$ is an embedding on the compact set $D$, there exists a possibly smaller open neighborhood $V \subset U$ containing $D$ such that $\tilde{\phi}|_V$ is a smooth embedding.
		
		By Lemma \ref{lem:palais_extension}, there exists a global diffeomorphism $F: \mathbb{R}^n \to \mathbb{R}^n$ that extends $\phi$, \textup{i.e.},
		\[
		F(x) = \phi(x), \quad \forall x \in D.
		\]
		
		According to Theorem \ref{thm:nn_is_invertible_approximator}, which states that the set of width-$n$ Deep Neural Networks (with Leaky-ReLU, ELU, or SELU activation functions) compactly approximates $\mathcal{D}^2(\mathbb{R}^n)$. Since $D$ is a compact set, for the given precision $\epsilon > 0$, there exists a width-$n$ DNN $\Phi: \mathbb{R}^n \to \mathbb{R}^n$ such that
		\[
		\lVert \Phi(x) - F(x) \rVert < \epsilon, \quad \forall x \in D.
		\]
		Since $F(x) = \phi(x)$ for all $x \in D$, it immediately follows that
		\[
		\lVert \Phi(x) - \phi(x) \rVert < \epsilon, \quad \forall x \in D.
		\]
		This completes the proof.
	\end{proof}
	
	While Theorem \ref{thm:embedding_approx} is restricted to embeddings, generalizing this approximation to arbitrary smooth maps requires overcoming potential topological obstructions, such as self-intersections. The following theorem resolves this by employing a dimension-lifting technique: decomposing the general smooth map into a higher-dimensional embedding followed by a canonical projection. This establishes that DNNs can uniformly approximate any smooth map on a compact set.
	
	\begin{theorem}\label{thm:projection}
		Let $D \subset \mathbb{R}^n$ be a compact set, and $f: D \to \mathbb{R}^n$ be a smooth map.
		Let $B^n \subset \mathbb{R}^n$ be an $n$-ball containing $D$. Suppose there exists an integer $m \ge n$ and a smooth embedding $g: D \to \mathbb{R}^m$ that can be extended to $B^n$, such that
		\[
		f(x) = \proj \circ g(x), \quad \forall x \in D,
		\]
		where $\proj: \mathbb{R}^m \to \mathbb{R}^n$ is the standard projection from $\mathbb{R}^m$ to $\mathbb{R}^n$.
		
		Then, for any $\epsilon > 0$, there exists a width-$m$ Deep Neural Network (DNN) $\Phi: \mathbb{R}^n \to \mathbb{R}^n$ with Leaky-ReLU, ELU, or SELU activation function such that
		\[
		\lVert \Phi(x) - f(x) \rVert < \epsilon, \quad \forall x \in D.
		\]
	\end{theorem}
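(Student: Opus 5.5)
The plan is to lift the problem to $\mathbb{R}^m$, approximate a diffeomorphism of $\mathbb{R}^m$ there, and then project back down. The DNN $\Phi$ will be written as
\[
\Phi = \proj \circ \Psi \circ \iota,
\]
where $\iota:\mathbb{R}^n\to\mathbb{R}^m$, $\iota(x)=(x,0)$, is the standard inclusion, and $\Psi$ is a width-$m$ network $\mathbb{R}^m\to\mathbb{R}^m$. Since $\iota$ and $\proj$ are linear, they can be absorbed into the first and last affine layers of $\Psi$. The composite is therefore still a DNN in the sense of Definition \ref{Deep Neural Network and Width}, with input dimension $n$, output dimension $n$, and hidden widths at most $m$.

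\textbf{Step 1: reduce to approximating a diffeomorphism of $\mathbb{R}^m$.} By hypothesis, $g$ extends to a smooth embedding of (a neighborhood of) the ball $B^n$; I keep calling it $g$. Put $\tilde D=\iota(\overline{B^n})$, a compact differentiable $n$-ball in $\mathbb{R}^m$. Define $\tilde g = g\circ\iota^{-1}$ on a neighborhood of $\tilde D$ inside $\iota(\mathbb{R}^n)$. To apply Lemma \ref{lem:palais_extension} we need an embedding defined on an open neighborhood of $\tilde D$ in $\mathbb{R}^m$, so I thicken it. Choose a smooth frame $N(x)$ of vectors complementing $dg_x(\mathbb{R}^n)$; this is possible because $\overline{B^n}$ is contractible, so the normal bundle of $g$ is trivial. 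Then set
\[
\hat g(x,y)=g(x)+N(x)y .
\]
By the inverse function theorem together with compactness (the standard tubular neighborhood argument), $\hat g$ is an embedding on $U\times B^{m-n}(0,\tau)$ for small $\tau>0$, and $\hat g\circ\iota=g$. Lemma \ref{lem:palais_extension} then gives a diffeomorphism $G$ of $\mathbb{R}^m$ with $G=\hat g$ on $\tilde D$. In particular $G(\iota(x))=g(x)$ for all $x\in D$.

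\textbf{Step 2: approximate.} The map $G$ is $C^\infty$, hence $C^2$. By Theorem \ref{thm:nn_is_invertible_approximator}, with $\alpha(\sigma)=0$ for Leaky-ReLU and the extension of Yang et al.\ covering ELU and SELU, there is a width-$m$ DNN $\Psi:\mathbb{R}^m\to\mathbb{R}^m$ with
\[
\lVert\Psi(z)-G(z)\rVert<\epsilon \quad\text{for all } z \text{ in the compact set } \iota(D).
\]
Since $\proj$ is $1$-Lipschitz, for every $x\in D$,
\[
\lVert\Phi(x)-f(x)\rVert=\lVert\proj\Psi(\iota x)-\proj G(\iota x)\rVert\le\lVert\Psi(\iota x)-G(\iota x)\rVert<\epsilon .
\]

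\textbf{Main obstacle.} The delicate step is Step 1: Lemma \ref{lem:palais_extension} requires an embedding on an open set of $\mathbb{R}^m$, whereas $g$ lives only on an $n$-dimensional ball. I would first try the normal-frame thickening described above, using triviality of the normal bundle over the contractible ball. If that proves awkward, the alternative is to shrink $B^n$ slightly and apply the isotopy extension theorem to the isotopy $t\mapsto g(tx)/t$, which is an isotopy of embeddings of the $n$-ball into $\mathbb{R}^m$ from $dg_0$ to $g$ (a linear map plus translation at $t=0$), and then compose the resulting diffeomorphism with a linear isomorphism of $\mathbb{R}^m$ extending $dg_0$.
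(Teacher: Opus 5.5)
Your proposal is correct and follows essentially the same route as the paper. It identifies $\mathbb{R}^n$ with $\mathbb{R}^n\times\{0\}\subset\mathbb{R}^m$, extends $g$ to a diffeomorphism $G$ of $\mathbb{R}^m$ via Lemma~\ref{lem:palais_extension}, approximates $G$ by a width-$m$ network via Theorem~\ref{thm:nn_is_invertible_approximator}, and projects back using $\lVert\proj\rVert\le 1$.

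Your normal-frame thickening and your absorption of $\iota$ and $\proj$ into the affine layers supply details the paper passes over; the thickening is needed because that lemma requires an embedding on an open subset of $\mathbb{R}^m$, whereas $g$ lives only on an $n$-dimensional ball. There is one slip, in your fallback argument: the isotopy should be $g(0)+\bigl(g(tx)-g(0)\bigr)/t$, because $g(tx)/t$ blows up as $t\to 0$ unless $g(0)=0$.
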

	
	\begin{proof}
		Let us regard $\mathbb{R}^n$ as a subspace of $\mathbb{R}^m$, \textup{i.e.}, $\mathbb{R}^m = \mathbb{R}^n \times \mathbb{R}^{m-n}$. Thus we have the inclusion
		\[
		D \subset B^n \subset \mathbb{R}^n \times \{0\} \subset \mathbb{R}^m.
		\]
		
		By hypothesis, we have an extending embedding $\tilde{g}: B^n \to \mathbb{R}^m$.
		According to Lemma \ref{lem:palais_extension}, since $B^n$ is topologically a trivial ball, this embedding $\tilde{g}$ can be extended to a global diffeomorphism of the ambient space $\mathbb{R}^m$.
		Let $G: \mathbb{R}^m \to \mathbb{R}^m$ be this diffeomorphism such that
		\[
		G|_{B^n} = \tilde{g}.
		\]
		It follows that $G|_D = g$.
		
		By Theorem \ref{thm:nn_is_invertible_approximator} of Hwang \cite{hwang2023minimum}, for any $\epsilon > 0$, there exists a DNN $\Psi$ of width $m$ with Leaky-ReLU, ELU, or SELU activation functions such that
		\[
		\lVert \Psi(z) - G(z) \rVert < \epsilon, \quad \forall z \in B^n.
		\]
		
		Construct the final network $\Phi$ by composing the projection with $\Psi$, given by
		\[
		\Phi(x) = \proj \circ\Psi(x).
		\]
		For any $x \in D$, we have
		\begin{align*}
			\lVert \Phi(x) - f(x) \rVert &= \lVert \proj(\Psi(x)) - \proj(g(x)) \rVert \\
			&= \lVert \proj(\Psi(x) - G(x)) \rVert\\
			&\le \lVert \proj \rVert \cdot \lVert \Psi(x) - G(x) \rVert < \epsilon.
		\end{align*}
		This completes the proof.
	\end{proof}
	
	To illustrate the practical implications of the established theoretical framework, we present three concrete examples. The first example provides an intuitive illustration of Theorem \ref{thm:linear_classifiability} regarding the linear classifiability of disjoint compact sets. The second example utilizes the dimension-lifting technique of Theorem \ref{thm:projection} to overcome topological obstructions, rendering linked sets linearly separable. The final example illustrates the approximation of smooth embeddings as established in Theorem \ref{thm:embedding_approx}.
	
	\begin{example}\label{ex:toy_obstruction}
		To illustrate the practical implications of Theorem \ref{thm:linear_classifiability}, we consider a toy example of datasets in $\mathbb{R}^2$, as shown in Figure \ref{fig:toy_example}. The dataset consists of three distinct classes: two inner datasets, labeled by $A$ and $B$, enclosed by an outer ring-shaped dataset, labeled by $C$. In the Euclidean plane $\mathbb{R}^2$, it is easy to see that there exists no global diffeomorphism $\Psi: \mathbb{R}^2 \to \mathbb{R}^2$ that can make these datasets linearly separable. This is because any global diffeomorphism preserves topological invariants. Specifically, the enclosure relationship is preserved. Since $C$ topologically encloses $A$ and $B$, its image $\Psi(C)$ must necessarily enclose $\Psi(A)$ and $\Psi(B)$. Consequently, it is impossible to construct any separating straight line in $\mathbb{R}^2$ to separate $\Psi(C)$ from $\Psi(A)$ or $\Psi(B)$. Thus, the topological obstruction is vital and cannot be overcome by any self-diffeomorphism of the plane. It is also apparent that the labeled sets cannot be separated by three pairwise disjoint balls, so Theorem \ref{thm:disjoint_mapping} and Lemma \ref{thm:DNN_relocation} cannot apply.
		
		However, the three datasets can be separated by three balls in $\mathbb{R}^3$ after linearly embedding them into $\mathbb{R}^3$. To see this, assume for convenience that the plane containing these datasets is the linear subspace $\mathbb{R}^2 \times \{0\}$ (i.e., the $xy$-plane). Then we can construct three balls in $\mathbb{R}^3$ that are pair-wise disjoint and contain the sets $A$, $B$, and $C$, as illustrated in Figure \ref{fig:3d_torus_protruding_cells} and Figure \ref{fig:2d_ushape_cross_section}. Thus, Theorem \ref{thm:linear_classifiability} can be applied to the case where the ambient space is $\mathbb{R}^3$.
		
		\begin{figure}[tbp]
			\centering 
			\begin{tikzpicture}
				\newcommand\Rout{3.6}  
				\newcommand\Rin{2.2}   
				\newcommand\rAB{0.7}   
				\newcommand\posX{1.0}  
				
				\filldraw[fill=cyan!20, draw=black!80, thick, even odd rule] 
				(0,0) circle (\Rout)
				(0,0) circle (\Rin);
				
				\node at (0, 2.9) {\Large \textbf{C}};
				
				\filldraw[fill=violet!30, draw=black!80, thick] 
				(-\posX, 0) circle (\rAB);
				\node at (-\posX, 0) {\Large \textbf{A}};
				
				\filldraw[fill=orange!30, draw=black!80, thick] 
				(\posX, 0) circle (\rAB);
				\node at (\posX, 0) {\Large \textbf{B}};
				
			\end{tikzpicture}
			\caption{Three distinct datasets in $\mathbb{R}^2$.} 
			\label{fig:toy_example} 
		\end{figure}
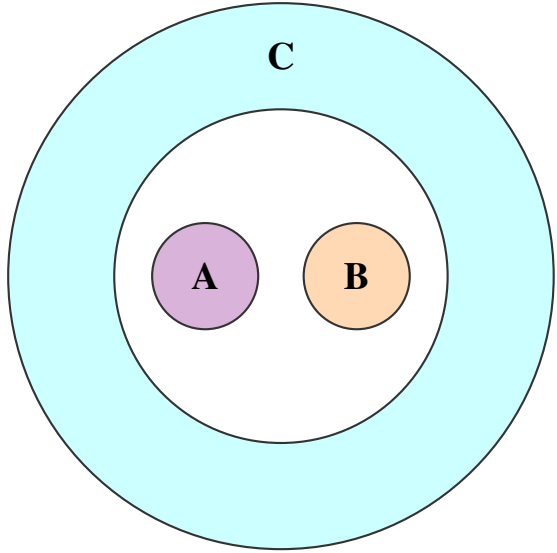
		
		\begin{figure}[tbp]
			\centering
			\subfloat[A full 3D perspective. A, B, and C are separated by mutually disjoint balls in $\mathbb{R}^3$.\label{fig:3d_torus_protruding_cells}]{%
				\resizebox{0.48\textwidth}{!}{%
					\begin{tikzpicture}[scale=1.0]
						\newcommand\Rout{5.4}   
						\newcommand\RoutY{2.2}  
						\newcommand\Zout{3.4}   
						\newcommand\Rin{2.3}    
						\newcommand\RinY{0.9}
						\newcommand\cRout{4.5}  
						\newcommand\cRoutY{1.8}
						\newcommand\cRin{3.1}   
						\newcommand\cRinY{1.2}
						\newcommand\Zup{1.4}    
						\newcommand\Rmid{3.85}  
						\newcommand\RmidY{1.55} 
						\newcommand\rx{1.55}    
						
						\draw[dashed, black!30, thick] (-7.5, 3.8) -- (7.5, 3.8) -- (7.5, -4.5) -- (-7.5, -4.5) -- cycle;
						\node[black!50, right] at (7.5, -4.5) {$xOy$ Plane};
						
						\filldraw[fill=cyan!30, draw=cyan!70!black, thick, opacity=0.35]
						(-\Rout, 0) arc [start angle=180, end angle=360, x radius=\Rout, y radius=\Zout]
						arc [start angle=360, end angle=180, x radius=\Rout, y radius=\RoutY];
						
						\shade[inner color=cyan!50, outer color=cyan!5] 
						(0,0) ellipse [x radius=\Rin, y radius=\RinY];
						\draw[cyan!60!black, thick] 
						(0,0) ellipse [x radius=\Rin, y radius=\RinY];
						
						\filldraw[fill=cyan!40, draw=cyan!80!black, thick, opacity=0.85, even odd rule]
						(0,0) ellipse [x radius=\cRout, y radius=\cRoutY]
						(0,0) ellipse [x radius=\cRin, y radius=\cRinY];
						
						\node[black] at (-3.8, 0) {\Large \textbf{C}};
						\node[black] at (3.8, 0) {\Large \textbf{C}};
						
						\coordinate (posA) at (-1.15, 0.1);
						\coordinate (posB) at (1.15, 0.1);
						
						\filldraw[fill=violet!50, draw=violet!80!black, thick] (posA) ellipse [x radius=0.35, y radius=0.14];
						\node[white] at (posA) {\textbf{A}};
						\shade[ball color=violet!30, opacity=0.4] (posA) circle (0.8);
						\draw[violet!80!black, dashed, opacity=0.8] (posA) circle (0.8);
						\node[violet!80!black, above] at (-1.15, 1.2) {\Large \textbf{Ball $B_A$}};
						
						\filldraw[fill=orange!50, draw=orange!80!black, thick] (posB) ellipse [x radius=0.35, y radius=0.14];
						\node[white] at (posB) {\textbf{B}};
						\shade[ball color=orange!30, opacity=0.4] (posB) circle (0.8);
						\draw[orange!80!black, dashed, opacity=0.8] (posB) circle (0.8);
						\node[orange!80!black, above] at (1.15, 1.2) {\Large \textbf{Ball $B_B$}};
						
						\fill[cyan!20, opacity=0.35, even odd rule]
						(0,0) ellipse [x radius=\Rout, y radius=\RoutY]
						(0,0) ellipse [x radius=\Rin, y radius=\RinY];
						
						\draw[cyan!50!black, thick, dashed, opacity=0.6] 
						(0, \Zup) ellipse [x radius=\Rmid, y radius=\RmidY];
						
						\draw[cyan!60!black, thick, opacity=0.6]
						(-\Rout, 0) arc [start angle=180, end angle=0, x radius=\rx, y radius=\Zup];
						\draw[cyan!60!black, thick, opacity=0.6]
						(\Rin, 0) arc [start angle=180, end angle=0, x radius=\rx, y radius=\Zup];
						
						\draw[cyan!60!black, thick, opacity=0.4]
						(0, -\RoutY) .. controls (0, -\RoutY + \Zup*0.9) and (0, -\RinY + \Zup*0.9) .. (0, -\RinY);
						\draw[cyan!60!black, thick, dashed, opacity=0.4]
						(0, \RoutY) .. controls (0, \RoutY + \Zup*0.9) and (0, \RinY + \Zup*0.9) .. (0, \RinY);
						
						\draw[cyan!70!black, thick, opacity=0.6] (0,0) ellipse [x radius=\Rout, y radius=\RoutY];
						\draw[cyan!70!black, thick, opacity=0.6] (0,0) ellipse [x radius=\Rin, y radius=\RinY];
						
						\node[cyan!80!black] at (-4.5, -2.6) {\Huge \textbf{Ball $B_C$}};
					\end{tikzpicture}%
				}%
			}
			\hfill 
			\subfloat[Cross-sectional view. A, B, and C are separated by mutually disjoint balls in $\mathbb{R}^3$.\label{fig:2d_ushape_cross_section}]{%
				\resizebox{0.48\textwidth}{!}{%
					\begin{tikzpicture}[scale=1.2]
						\filldraw[fill=cyan!10, draw=black!80, thick]
						(-4.5, 1.5)
						-- (-4.5, 0) arc (180:360:4.5 and 2.5) 
						-- (4.5, 1.5)
						arc (0:180:0.75)                       
						-- (3.0, 0) arc (360:180:3.0 and 1.0)  
						-- (-3.0, 1.5)
						arc (0:180:0.75) -- cycle;             
						
						\filldraw[fill=violet!10, draw=black!80, thick] (-1.2, 1.5) circle (0.8);
						\filldraw[fill=orange!10, draw=black!80, thick] (1.2, 1.5) circle (0.8);
						
						\draw[cyan!90!black, line width=3pt, line cap=round] (-4.2, 1.5) -- (-3.3, 1.5);
						\node[above] at (-3.75, 1.6) {\Large \textbf{C}};
						
						\draw[violet!90!black, line width=3pt, line cap=round] (-1.6, 1.5) -- (-0.8, 1.5);
						\node[above] at (-1.2, 1.6) {\Large \textbf{A}};
						
						\draw[orange!90!black, line width=3pt, line cap=round] (0.8, 1.5) -- (1.6, 1.5);
						\node[above] at (1.2, 1.6) {\Large \textbf{B}};
						
						\draw[cyan!90!black, line width=3pt, line cap=round] (3.3, 1.5) -- (4.2, 1.5);
						\node[above] at (3.75, 1.6) {\Large \textbf{C}};
						
						\node[cyan!80!black] at (0, -2.0) {\large Ball $B_C$};
						\node[violet!80!black] at (-1.2, 0.4) {\large Ball $B_A$};
						\node[orange!80!black] at (1.2, 0.4) {\large Ball $B_B$};
					\end{tikzpicture}%
				}%
			}
			\caption{Visualizing the separation in higher dimensions.}
		\end{figure}
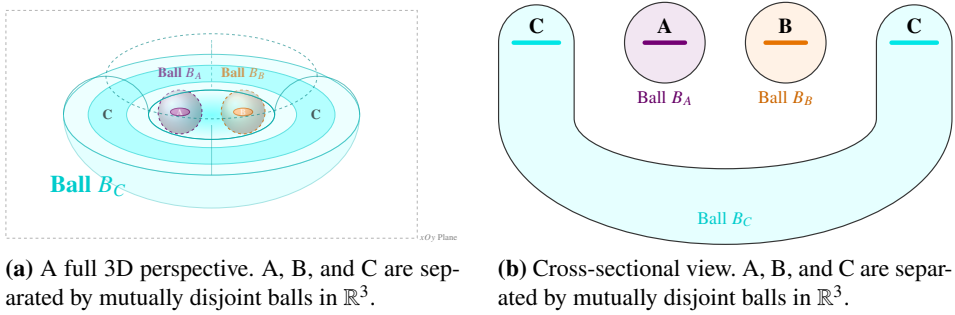
	\end{example}
	
	\noindent\textbf{Experimental Simulation.} As stated in the theoretical analysis of Example \ref{ex:toy_obstruction}, embedding the datasets into $\mathbb{R}^3$ implies that it suffices to employ a width-$3$ DNN. To empirically validate this theoretical guarantee, we constructed and trained a width-$3$ DNN with a Leaky-ReLU activation function. Specifically, we define the input regions $A$, $B$, and $C$ as follows:
	\begin{align*}
		A &= \{(x, y) \mid (x+1)^2+(y-1)^2 \leq 1\}, \\
		B &= \{(x, y) \mid (x-1)^2+(y+1)^2 \leq 1\}, \\
		C &= \{(x, y) \mid 9 \leq x^2+y^2 \leq 25\}.
	\end{align*}
	
	\begin{figure}[tbp]
		\centering
		\subfloat[Original 2D datasets (INPUT)\label{fig:ex20_input}]{%
			\includegraphics[width=0.45\textwidth]{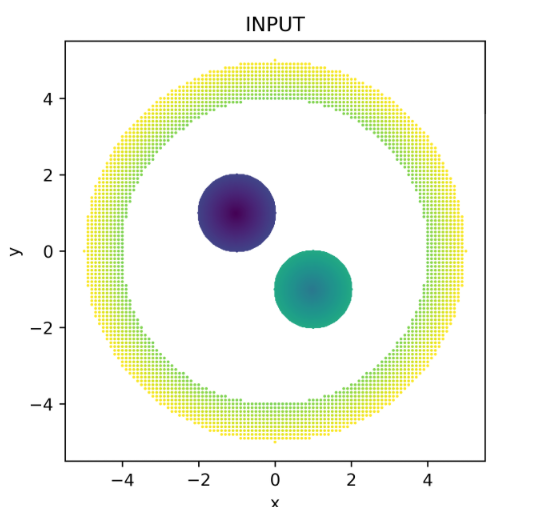}%
		}\hfill
		\subfloat[Linear embedding into $\mathbb{R}^3$\label{fig:ex20_embed}]{%
			\includegraphics[width=0.45\textwidth]{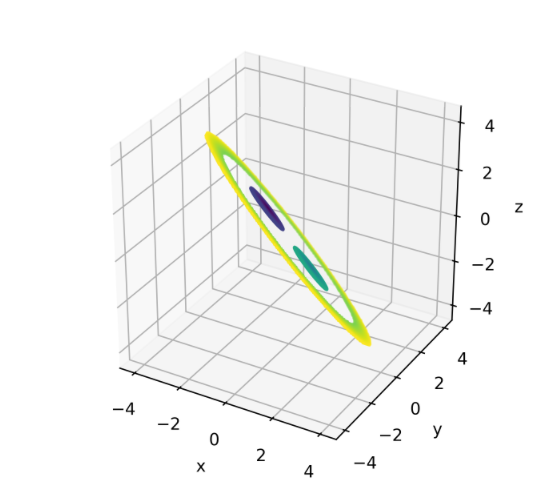}%
		}
		
		\subfloat[3D deformation by the DNN\label{fig:ex20_deform}]{%
			\includegraphics[width=0.45\textwidth]{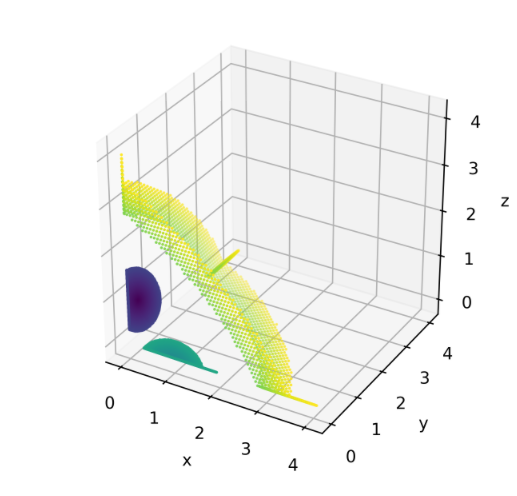}%
		}\hfill
		\subfloat[Linearly separable state (OUTPUT)\label{fig:ex20_output}]{%
			\includegraphics[width=0.45\textwidth]{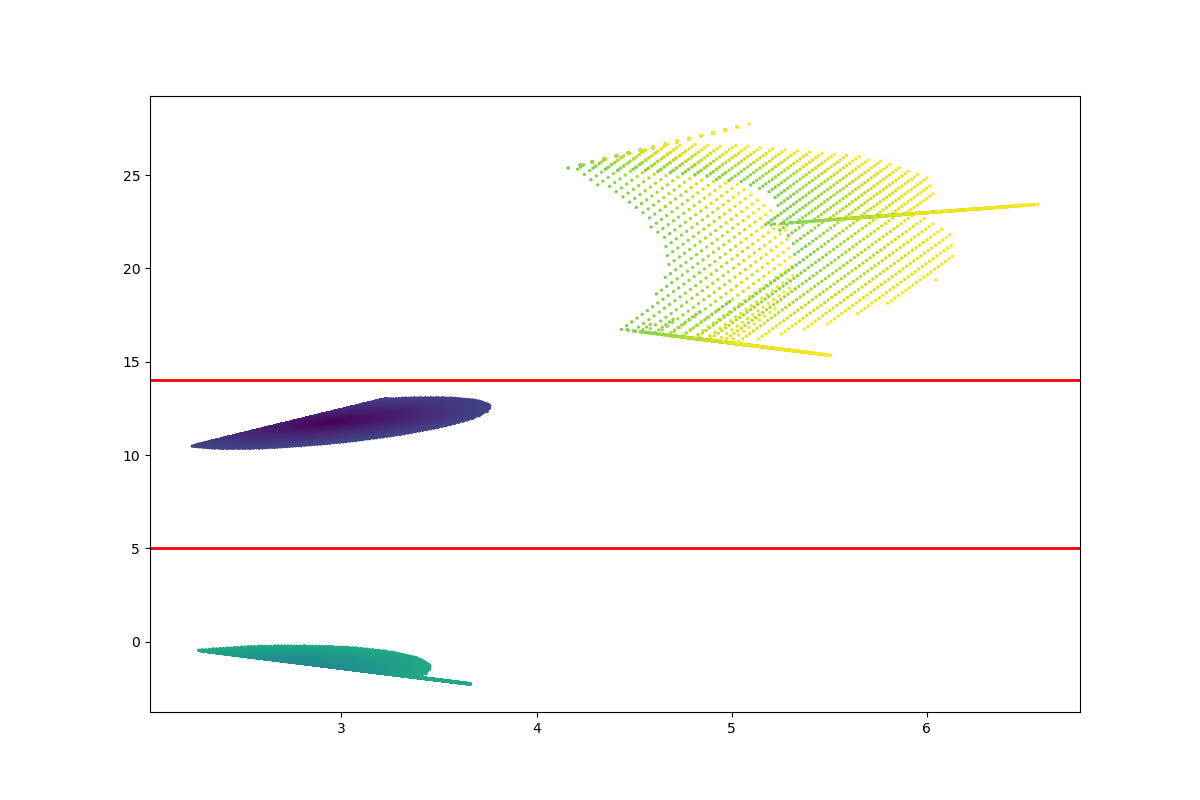}%
		}
		
		\caption{Experimental simulation for Example \ref{ex:toy_obstruction}. (a) The non-linearly separable topological arrangement in $\mathbb{R}^2$. (b) The datasets are linearly embedded into a higher-dimensional space $\mathbb{R}^3$. (c) Through successive hidden layers, the DNN progressively deforms the space, effectively lifting and bending the outer ring. (d) The final output configuration where all three datasets are strictly separable by hyperplanes.}
		\label{fig:sim_ex20_grid}
	\end{figure}
	
	We define the neural network architecture as $\Phi(x) = T_6 \circ \mathrm{Leaky\textup{-}ReLU}_{0.0001} \circ \cdots \circ \mathrm{Leaky\textup{-}ReLU}_{0.0001}  \circ T_1$, where the affine transformations  $T_i$ are parameterized by weight matrices $\{W_i\}_{i=1, 2, \cdots,  6}$ and bias vectors $\{b_i\}_{i=1, 2, \cdots, 6}$ given by
	\begin{align*}
		W_1 &= \begin{pmatrix}
			\frac{\sqrt{6}}{6} & \frac{\sqrt{2}}{2} \\
			\frac{\sqrt{6}}{6} & \frac{\sqrt{2}}{2} \\
			-\frac{\sqrt{6}}{3} & 0
		\end{pmatrix}, 
		& b_1 &= \begin{pmatrix}
			0 \\ 0 \\ 0
		\end{pmatrix}, \\
		W_2 &= \begin{pmatrix}
			0.8609 & 1.1220 & 0.7568\\
			-1.1220 & 0.8609 & 1.9176\\
			86.09 & 112.2 & 75.68
		\end{pmatrix}, 
		& b_2 &= \begin{pmatrix}
			2\\
			5\\
			-190
		\end{pmatrix},\\
		W_3 &= \begin{pmatrix}
			1 & 0 & 0\\
			0 & 1 & 0\\
			0 & 0 & -1
		\end{pmatrix}, 
		& b_3 &= \begin{pmatrix}
			0\\
			0\\
			10
		\end{pmatrix},\\
		W_4 &= \begin{pmatrix}
			1 & 0 & 0\\
			0 & -1 & 1\\
			0 & 0 & 0
		\end{pmatrix}, 
		& b_4 &= \begin{pmatrix}
			0\\
			-4.9\\
			0
		\end{pmatrix},\\
		W_5 &= \begin{pmatrix}
			1 & 0 & 0\\
			0 & -1 & 0\\
			0 & 0 & 1
		\end{pmatrix}, 
		& b_5 &= \begin{pmatrix}
			0\\
			0.001\\
			0
		\end{pmatrix},\\
		W_6 &= \begin{pmatrix}
			1 & 0 & 0\\
			0 & 10000 & 0
		\end{pmatrix}, 
		& b_6 &= \begin{pmatrix}
			0\\
			0
		\end{pmatrix}.
	\end{align*}
	By applying this transformation $\Phi$, the datasets $A$, $B$, and $C$ are successfully mapped into linearly separable configurations in the output space, as illustrated in Figure \ref{fig:sim_ex20_grid}. \hfill $\triangle$
	
	Now, we apply our framework to a topological unlinking problem. The Hopf Link consists of two linked circles which cannot be separated by ambient isotopy in $\mathbb{R}^3$ without intersecting. However, by utilizing an extra dimension and applying our relocation theory, we can resolve this topological obstruction by dimension lifting and projection.
	
	\begin{example}\label{ex:hopf_link}
		Let $K_1, K_2 \subset \mathbb{R}^3$ be two disjoint subsets, each diffeomorphic to $S^1$, forming a non-trivial Hopf Link, as shown in Figure \ref{fig:hopf_link}. Let $B_1, B_2 \subset \mathbb{R}^3$ be any two disjoint open balls. Then, there exists a Deep Neural Network (DNN) $\Phi: \mathbb{R}^3 \to \mathbb{R}^3$ using Leaky-ReLU, ELU, or SELU activation function with width $d=4$, such that
		\[
		\Phi(K_1) \subset B_1 \quad \textup{and} \quad \Phi(K_2) \subset B_2.
		\]
		Consequently, the Hopf Link can be made linearly separable by the DNN.
	\end{example}
	
	\begin{figure}[tbp]
		\centering
		\begin{tikzpicture}
			\begin{axis}[
				view={35}{25},      
				width=10cm,
				height=10cm,
				hide axis,         
				xmin=0, xmax=8,
				ymin=0, ymax=9,
				zmin=0, zmax=7,
				]
				
				\addplot3 [
				samples y=0,        
				domain=180:360,
				variable=\t,
				samples=60,
				color=orange!80,
				line width=6pt
				] ({4}, {5+2*cos(\t)}, {3+2*sin(\t)});
				
				\addplot3 [
				samples y=0,        
				domain=0:360,
				variable=\t,
				samples=100,
				color=violet!70,
				line width=6pt,
				preaction={draw, white, line width=10pt} 
				] ({4+2*cos(\t)}, {3+2*sin(\t)}, {3});
				
				\addplot3 [
				samples y=0,        
				domain=0:180,
				variable=\t,
				samples=60,
				color=orange!80,
				line width=6pt,
				preaction={draw, white, line width=10pt}
				] ({4}, {5+2*cos(\t)}, {3+2*sin(\t)});
				
				\node at (axis cs: 7.2, 3, 3) {\Large $\mathbf{K_1}$};
				\node at (axis cs: 4, 5, 5.8) {\Large $\mathbf{K_2}$};
				
			\end{axis}
		\end{tikzpicture}
		\caption{A non-trivial Hopf Link embedded in $\mathbb{R}^3$.}
		\label{fig:hopf_link}
	\end{figure}
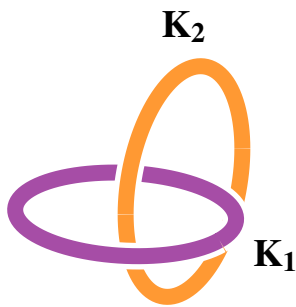
	
	\begin{proof}
		Let $D = K_1 \cup K_2$. We consider $\mathbb{R}^3$ as the hyperplane $\mathbb{R}^3 \times \{0\} \subset \mathbb{R}^4$.
		Since $K_1$ and $K_2$ are disjoint closed sets in $\mathbb{R}^3$, by the existence of smooth bump functions (Proposition 2.25 in Lee \cite{lee2013introduction}), there exists a smooth function $\psi: \mathbb{R}^3 \to [0, 1]$ such that $\psi|_{K_1} \equiv 0$ and $\psi|_{K_2} \equiv 1$.
		
		Define the map $E: \mathbb{R}^3 \to \mathbb{R}^4$ as
		\[
		E(x) = (x, C \cdot \psi(x)),
		\]
		where $C > 0$ is a sufficiently large constant. This map $E$ is a smooth embedding because its restriction to the first three coordinates is the identity map.
		
		Since $K_1$ and $K_2$ are compact, they are bounded. There exists a radius $R > 0$ such that $K_1, K_2 \subset B^3(0, R)$.
		In $\mathbb{R}^4$, the image sets are
		\begin{align*}
			E(K_1) = K_1 \times \{0\} \subset B^3(0, R) \times \{0\}, \\
			E(K_2) = K_2 \times \{C\} \subset B^3(0, R) \times \{C\}.
		\end{align*}
		Now, consider two balls in $\mathbb{R}^4$, given by
		\[
		\mathcal{U}_1 = B^4((0,0,0,0), 2R) \quad \textup{and} \quad \mathcal{U}_2 = B^4((0,0,0,C), 2R).
		\]
		Clearly, $E(K_1) \subset \mathcal{U}_1$ and $E(K_2) \subset \mathcal{U}_2$.
		By choosing $C > 4R$, we ensure that the balls $\mathcal{U}_1$ and $\mathcal{U}_2$ are mutually disjoint.
		
		Let $\proj: \mathbb{R}^4 \to \mathbb{R}^3$ be the standard projection. Choose two target balls $\mathcal{V}_1, \mathcal{V}_2 \subset \mathbb{R}^4$ such that they are mutually disjoint, disjoint from $\mathcal{U}_1$ and $\mathcal{U}_2$, and satisfy $\proj(\mathcal{V}_1) \subset B_1$ and $\proj(\mathcal{V}_2) \subset B_2$.
		
		In view of Theorem \ref{thm:disjoint_mapping}, there exists a global diffeomorphism $\Psi: \mathbb{R}^4 \to \mathbb{R}^4$ such that
		\[
		\Psi(E(K_i)) \subset \mathcal{V}_i, \quad i=1, 2.
		\]
		
		We define the mapping $g = \Psi \circ E: \mathbb{R}^3 \to \mathbb{R}^4$ and construct $f: D \to \mathbb{R}^3$ as the composition
		\[
		f(x) = \proj \circ g(x).
		\]
		Consequently, we have $f(K_i) = \proj \circ (\Psi(E(K_i))) \subset \proj(\mathcal{V}_i) \subset B_i$ for $i=1,2$.
		
		Since $f$ satisfies the condition of Theorem \ref{thm:projection} (with $m=4$), for any $\epsilon > 0$, there exists a width-$4$ DNN $\Phi: \mathbb{R}^3 \to \mathbb{R}^3$ with Leaky-ReLU or ELU activation function such that
		\[
		\lVert\Phi(x) - f(x)\rVert < \epsilon, \quad \forall x \in K_1 \cup K_2.
		\]
		
		Finally, let $\delta_1 = \dist(f(K_1), \partial B_1)$ and $\delta_2 = \dist(f(K_2), \partial B_2)$. Since $f(K_i)$ is compact and strictly contained in the open ball $B_i$, these distances are strictly positive.
		Choosing $\epsilon < \min(\delta_1, \delta_2)$, the approximation condition implies
		\[
		\Phi(K_1) \subset B_1 \quad \textup{and} \quad \Phi(K_2) \subset B_2.
		\]
		This completes the proof.
	\end{proof}
	
	In view of the above arguments, it is easy to see that a width-$4$ deep neural network (DNN) can map the Hopf Link to two linearly separable circles in the plane, as demonstrated in the following computer simulation.
	
	\noindent\textbf{Experimental Simulation.} To empirically demonstrate this theoretical result in Example \ref{ex:hopf_link}, we constructed a dataset representing the Hopf Link and applied the proposed dimension-lifting DNN. Specifically, we define the Hopf Link $L = L_1 \cup L_2$ as follows:
	\begin{align*}
		L_1 &= \{(x, y, 0) \mid (x+1)^2+y^2=4\}, \\
		L_2 &= \{(x, 0, z) \mid (x-1)^2+z^2=4\}.
	\end{align*}
	We trained a width-$4$ neural network expressed as $\Phi(x) = T_5 \circ \sigma \circ \dots \circ \sigma \circ T_1$, where the activation function $\sigma$ is $\mathrm{ELU}_1$. The weight matrices $\{W_i\}_{i=1, \dots, 5}$ and bias vectors $\{b_i\}_{i=1, \dots, 5}$ are given by
	\begin{align*}
		W_1 &= \begin{pmatrix}
			0.7202 & 0.2662 &  0.5537 \\
			-0.0776 &  0.4474 &  0.2805 \\
			-0.2377 &  0.1716 & 0.7948 \\
			0.3674 &  0.4314 & -0.3063
		\end{pmatrix}, 
		& b_1 &= \begin{pmatrix}
			-0.7492 \\ -0.2611 \\ -0.7549 \\ 1.0830
		\end{pmatrix}, \\
		W_2 &= \begin{pmatrix}
			-0.7003 & -1.9739 &   0.7907 &  0.6326 \\
			2.2762 & -2.6667 &  2.2235 & 0.6519 \\
			1.2144 &  2.4500 &  -0.7188 & -1.2508 \\
			0.2836 &  0.4959 & -1.5312 & -0.5781
		\end{pmatrix},
		& b_2 &= \begin{pmatrix}
			0.8253 \\ -0.1560 \\ 0.9420 \\ 0.3498
		\end{pmatrix}, \\
		W_3 &= \begin{pmatrix}
			0.1789 & -0.4813 & -0.2008 &  1.2680 \\
			-0.5009 & -0.3258 & -1.3531 & -1.0667 \\
			-0.7222 & -0.4798 & -2.3012 &  0.9997 \\
			1.1325 &  0.2196 &  0.1678 &  0.5455
		\end{pmatrix},
		& b_3 &= \begin{pmatrix}
			0.7573 \\  0.5837 \\ -0.2989 \\ 0.5017
		\end{pmatrix}, \\
		W_4 &= \begin{pmatrix}
			-0.7919 &  1.8838 &  0.1805 &  0.1470 \\
			0.5050 &  1.8152 &  1.5224 & -0.2909 \\
			-1.3450 & -2.1239 & -0.3029 &  0.7759 \\
			-1.2642 &  2.9938 &  -0.5644 &  1.1552
		\end{pmatrix},
		& b_4 &= \begin{pmatrix}
			0.6058 \\ -1.0162 \\  0.8334 \\ -1.3651
		\end{pmatrix}, \\
		W_5 &= \begin{pmatrix}
			0.2904 &  4.5729 &  2.0251 &  3.5745 \\
			4.1556 & -0.6492 &  0.4879 & -2.0542
		\end{pmatrix},
		& b_5 &= \begin{pmatrix}
			-2.3310 \\ -0.5897
		\end{pmatrix}.
	\end{align*}
	\begin{figure}[tbp]
		\centering
		\subfloat[The original Hopf Link embedded in $\mathbb{R}^3$\label{fig:hopf_left}]{%
			\includegraphics[width=0.45\textwidth]{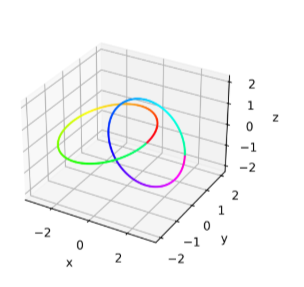}%
		}\hfill 
		\subfloat[The datasets effectively untangled and separated\label{fig:hopf_right}]{%
			\includegraphics[width=0.45\textwidth]{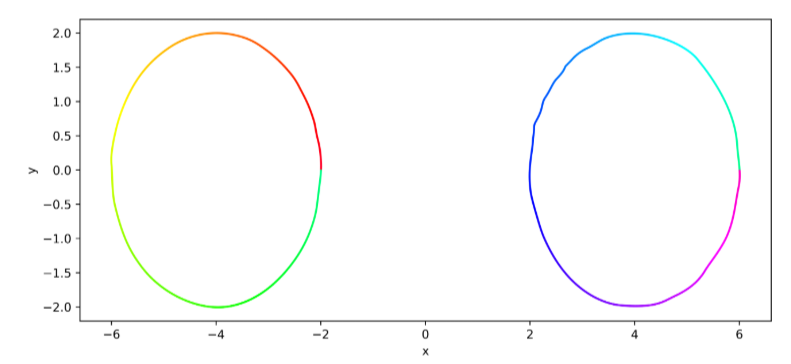}%
		}
		\caption{Simulation results for resolving the Hopf Link topological obstruction. \textbf{Left:} The original Hopf Link embedded in $\mathbb{R}^3$. \textbf{Right:} The datasets effectively untangled and separated into linearly classifiable configurations after the dimension-lifting DNN transformation.}
		\label{fig:sim_ex21_hopf}
	\end{figure}
	
	By applying this transformation, the originally entangled Hopf Link $L$ is successfully separated into two distinct, linearly classifiable circles, as illustrated in Figure \ref{fig:sim_ex21_hopf}. \hfill $\triangle$
	
	The dimension-lifting strategy demonstrated in Example \ref{ex:hopf_link} can be naturally generalized to arbitrary dimensions. Specifically, in view of the geometric embedding established in Theorem \ref{thm:embedding_rn1} and the universal approximation property of Theorem \ref{thm:nn_is_invertible_approximator}, we have the following general result.
	
	\begin{theorem}\label{thm:dnn_rn1}
		For arbitrary mutually disjoint compact datasets $K_1, \dots, K_m$ in $\mathbb{R}^n$, there exists a width-$(n+1)$ deep neural network (DNN) $\Phi$ with Leaky-ReLU, ELU, or SELU activation function such that $\Phi$ can make $K_1, \dots, K_m$ linearly separable in $\mathbb{R}^{n+1}$.
	\end{theorem}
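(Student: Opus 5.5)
The plan follows the dimension-lifting argument of Example \ref{ex:hopf_link}, now with $m$ sets. We regard $\mathbb{R}^n$ as $\mathbb{R}^n\times\{0\}\subset\mathbb{R}^{n+1}$.

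\textbf{Step 1: separate the sets along a new coordinate.} The sets $K_1,\dots,K_m$ are mutually disjoint and compact, so they are pairwise at positive distance. We choose open neighbourhoods $U_i\supset K_i$ with disjoint closures. Smooth bump functions (Urysohn's lemma in its smooth form) give a smooth $\psi:\mathbb{R}^n\to[0,m]$ with $\psi\equiv i$ on $U_i$.

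\textbf{Step 2: lift into $\mathbb{R}^{n+1}$.} Fix $R$ with $\bigcup_i K_i\subset B^n(0,R)$. Define
\[
E(x)=(x,\,C\psi(x)), \qquad C>4R.
\]
This $E$ is a smooth embedding of $\mathbb{R}^n$ into $\mathbb{R}^{n+1}$, being a graph. Moreover $E(K_i)\subset K_i\times\{Ci\}$, which lies in the ball $\mathcal{U}_i=B^{n+1}((0,Ci),2R)$. The choice $C>4R$ makes the balls $\mathcal{U}_1,\dots,\mathcal{U}_m$ mutually disjoint.

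\textbf{Step 3: relocate into linearly separable targets.} As in the proof of Theorem \ref{thm:linear_classifiability}, we pick unit balls $\mathcal{V}_i$ centred at $p+(i-1)v$ with $\|v\|>2$, with $p$ far enough that every $\mathcal{V}_i$ misses $\bigcup_j\mathcal{U}_j$. These $\mathcal{V}_i$ are separated by the parallel hyperplanes orthogonal to $v$ through the midpoints between consecutive centres. Theorem \ref{thm:disjoint_mapping} in $\mathbb{R}^{n+1}$ then gives a diffeomorphism $\Psi$ with $\Psi(E(K_i))\subset\mathcal{V}_i$.

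\textbf{Step 4: realise by a network.} Let $\Psi'$ be a width-$(n+1)$ DNN approximating $\Psi$ uniformly on the compact set $E(\bigcup_i K_i)$. It is supplied by Theorem \ref{thm:nn_is_invertible_approximator} with $\alpha(\sigma)=0$ for Leaky-ReLU, and by Yang et al.\ for ELU and SELU. The error is taken below $\mu=\min_i\dist(\Psi(E(K_i)),\partial\mathcal{V}_i)>0$. Then $\Phi=\Psi'\circ E$ maps each $K_i$ into $\mathcal{V}_i$, so the images are linearly separable in $\mathbb{R}^{n+1}$.

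\textbf{Main obstacle.} The map $E$ is not affine, so $\Phi=\Psi'\circ E$ is not literally a network. To handle this, we recast the problem entirely in $\mathbb{R}^{n+1}$. Let $\iota:x\mapsto(x,0)$ be the inclusion, which is affine. Consider the diffeomorphism
\[
S(x,t)=(x,\,t+C\psi(x))
\]
of $\mathbb{R}^{n+1}$, whose inverse is $(x,t)\mapsto(x,t-C\psi(x))$. Then $E=S\circ\iota$, and $\Psi\circ S$ is a single diffeomorphism of $\mathbb{R}^{n+1}$.

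We apply the approximation theorem to $\Psi\circ S$ on the compact set $\iota(\bigcup_i K_i)$, obtaining a DNN $N$. We then set $\Phi=N\circ\iota$. Absorbing $\iota$ into the first affine layer $T_1$ of $N$ gives a network from $\mathbb{R}^n$ to $\mathbb{R}^{n+1}$ whose hidden layers all have width $n+1$.

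There is one remaining check. Theorem \ref{thm:nn_is_invertible_approximator} requires a $C^2$ diffeomorphism, possibly of a restricted class such as compactly supported or isotopic to the identity. The map $\Psi$ is a composition of time-one flows of compactly supported fields, and $S$ is the time-$C$ flow of the compactly supported field $\psi\,\partial_t$. So $\Psi\circ S$ is smooth and isotopic to the identity, which should meet any such hypothesis.

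Theorem \ref{thm:disjoint_mapping} also needs $n+1\ge2$, which holds for all $n\ge1$.
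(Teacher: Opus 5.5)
Your proof is correct, and its overall plan matches the one the paper intends. The paper gives no separate argument for Theorem~\ref{thm:dnn_rn1}. It only points to the dimension-lifting of Example~\ref{ex:hopf_link}, together with Theorem~\ref{thm:embedding_rn1} and Theorem~\ref{thm:nn_is_invertible_approximator}. The paper leaves the proof of Theorem~\ref{thm:embedding_rn1} to the reader, and your Steps~1--2 effectively supply it.

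The genuine difference is how you turn the non-affine map $\Psi\circ E$ into something a width-$(n+1)$ network can approximate.
\begin{itemize}
\item The paper's route goes through Theorem~\ref{thm:projection}. It restricts the embedding $\Psi\circ E$ to an $n$-ball $B\subset\mathbb{R}^n\times\{0\}$. It then invokes Palais's extension theorem (Lemma~\ref{lem:palais_extension}) to obtain some global diffeomorphism $G$ of $\mathbb{R}^{n+1}$ with $G|_B=\Psi\circ E$. Finally it approximates $G$ on $B$ and absorbs the inclusion into the first layer.
\item You instead observe that the graph embedding factors as $E=S\circ\iota$ through the explicit shear $S(x,t)=(x,t+C\psi(x))$. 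So $\Psi\circ S$ is a concrete global extension.
\end{itemize}
Your version is more elementary and self-contained: it needs no extension theorem, no restriction to a ball, and the extension is visibly isotopic to the identity. The Palais route is more general, since it handles any smooth embedding of a ball into $\mathbb{R}^{n+1}$ rather than only graphs. The price is reliance on a deeper result.

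One small correction. The field $\psi(x)\,\partial_t$ is not compactly supported in $\mathbb{R}^{n+1}$, since it is invariant under translation in $t$, so $S$ is not compactly supported either. This does not affect the argument, because Theorem~\ref{thm:nn_is_invertible_approximator} as stated applies to any $C^2$-diffeomorphism of $\mathbb{R}^{n+1}$, and approximation is only needed on a compact set. If you want compact support anyway, take $\psi$ compactly supported. Then use the time-$C$ flow of $\chi(t)\psi(x)\,\partial_t$, with $\chi$ compactly supported and $\chi\equiv 1$ on $[0,Cm]$. Because $0\le\psi\le m$, this flow agrees with $S$ on $\mathbb{R}^n\times\{0\}$.
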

	
	As a by-product, we consider the well-known ``Swiss Roll'' dataset to illustrate the practical implication of Theorem \ref{thm:embedding_approx}. We demonstrate that a width-$3$ DNN can approximate the smooth embedding that unrolls this 2-dimensional structure embedded in $\mathbb{R}^3$.
	
	\begin{example}\label{ex:swiss_roll}
		Let $S \subset \mathbb{R}^3$ be the ``Swiss Roll'' manifold. As a concrete example, one such manifold can be explicitly parameterized as the image of the smooth embedding $\psi: [0, 12] \times [T_0, T_1] \to \mathbb{R}^3$ given by
		\[
		\psi(s, t) = (s, t \cos t + 15, t \sin t + 15),
		\]
		where $0 < T_0 < T_1$ determines the rolling extent. As shown in Figure \ref{fig:swiss_roll}, there exists a width-$3$ Deep Neural Network (DNN) $\Phi$ using Leaky-ReLU, ELU, or SELU activation function that can ``roll out'' $S$.
	\end{example}
	
	\begin{figure}[tbp]
		\centering 
		\includegraphics[width=0.8\textwidth]{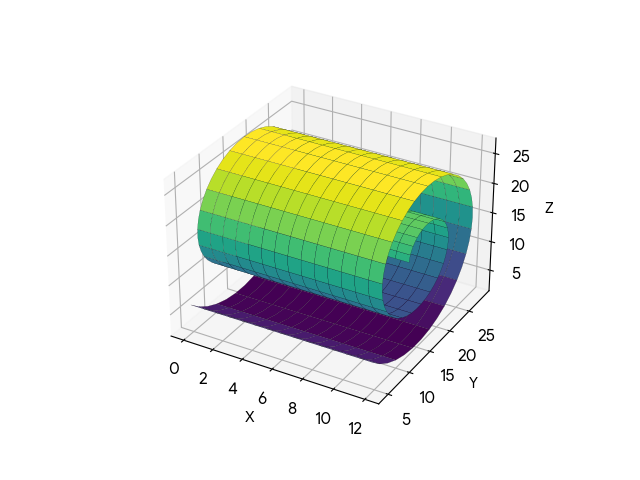} 
		\caption{The continuous 2-dimensional Swiss Roll manifold $S$ embedded in $\mathbb{R}^3$.} 
		\label{fig:swiss_roll}
	\end{figure}
	
	\begin{proof}		
		By definition, the Swiss Roll $S$ is a 2-dimensional differentiable ball embedded in $\mathbb{R}^3$. The unrolling map $\psi: S \to \mathbb{R}^3$ is a smooth embedding. Thus, $S$ and $\psi$ satisfy the hypotheses of Theorem \ref{thm:embedding_approx} with $n=3$ and $k=2$. Consequently, for any given $\epsilon > 0$, there exists a width-$3$ Deep Neural Network $\Phi: \mathbb{R}^3 \to \mathbb{R}^3$ such that
		\[
		\lVert \Phi(x) - \psi(x) \rVert < \epsilon, \quad \forall x \in S.
		\]
		This completes the proof.
	\end{proof}
	
\section*{Acknowledgements}
		We thank the School of Mathematics and Statistics, Huazhong University of Science and Technology (Wuhan, 430074, China), and the Hubei Key Laboratory of Engineering Modeling and Scientific Computing (Wuhan, 430074, China) for their support.
	
\section*{Funding}
		This work was supported by the National Natural Science Foundation of China (Grant No. 12531017).
	


\begin{thebibliography}{10}
\providecommand{\url}[1]{\texttt{#1}}
\providecommand{\urlprefix}{URL }
\providecommand{\eprint}[2][]{\url{#2}}

\bibitem{braga2020fundamentals}
Braga-Neto, U.: Fundamentals of Pattern Recognition and Machine Learning. Springer, Cham (2020)

\bibitem{cohen2020separability}
Cohen, U., Chung, S., Lee, D.~D., Sompolinsky, H.: Separability and geometry of object manifolds in deep neural networks. Nat. Commun. \textbf{11}, 746 (2020)

\bibitem{grootswagers2019untangling}
Grootswagers, T., Robinson, A.~K., Shatek, S.~M., Carlson, T.~A.: Untangling featural and conceptual object representations. NeuroImage \textbf{202}, 116083 (2019)

\bibitem{hanin2017approximating}
Hanin, B., Sellke, M.: Approximating continuous functions by {ReLU} nets of minimal width. arXiv preprint arXiv:1710.11278  (2017)

\bibitem{hwang2023minimum}
Hwang, G.: Minimum width for deep, narrow {MLP}: A diffeomorphism approach. In: Advances in Neural Information Processing Systems, 38 (2025)

\bibitem{kidger2020universal}
Kidger, P., Lyons, T.: Universal approximation with deep narrow networks. Conference on Learning Theory 2306--2327 (2020)

\bibitem{lee2013introduction}
Lee, J.~M.: Introduction to Smooth Manifolds. Second ed., Graduate Texts in Mathematics 218, Springer, New York (2013)

\bibitem{palais1960extending}
Palais, R.~S.: Extending diffeomorphisms. Proceedings of the American Mathematical Society \textbf{11}, 274--277 (1960)

\bibitem{teshima2020coupling}
Teshima, T., Ishikawa, I., Tojo, K., Oono, K., Ikeda, M., Sugiyama, M.: Coupling-based invertible neural networks are universal diffeomorphism approximators. Advances in Neural Information Processing Systems \textbf{33}, 3362--3373 (2020)

\bibitem{yang2025minimum}
Yang, X.-S., Zhou, Q., Zhou, X.: Minimum width of deep narrow networks for universal approximation. arXiv preprint arXiv:2511.06837  (2025)

\end{thebibliography}
\end{document}